\theoremstyle{plain}
\newtheorem{theorem}{Theorem}[section]
\newtheorem{lemma}[theorem]{Lemma}
\theoremstyle{definition}
\theoremstyle{remark}
\icmltitlerunning{Preprint}
\begin{document}

\twocolumn[
\icmltitle{Sobolev Training for  Operator Learning}
%
%
%
\icmlsetsymbol{equal}{*}
\begin{icmlauthorlist}
\icmlauthor{Namkyeong Cho}{equal,postech}
\icmlauthor{JUNSEUNG RYU}{equal,postech}
\icmlauthor{Hyung Ju Hwang}{postech}
\end{icmlauthorlist}
\icmlaffiliation{postech}{Department of Mathematics
Pohang University of Science and Technology Pohang, 37673, Republic of Korea}
\icmlcorrespondingauthor{Hyung Ju Hwang}{hjhwang@postech.ac.kr}
\icmlkeywords{}
\vskip 0.3in
]




\newcommand{\bu} {\mathbf{u}}
\newcommand{\bx} {\mathbf{x}}
\newcommand{\bg} {\mathbf{g}}

\printAffiliationsAndNotice{\icmlEqualContribution} 
\begin{abstract}
   This study investigates the impact of Sobolev Training on operator learning frameworks for improving model performance. Our research reveals that integrating derivative information into the loss function enhances the training process, and we propose a novel framework to approximate derivatives on irregular meshes in operator learning. Our findings are supported by both experimental evidence and theoretical analysis.
   This demonstrates the effectiveness of Sobolev Training in approximating the solution operators between infinite-dimensional spaces.
\end{abstract}

\section{Introduction}\label{introduction}
Over the last few decades, machine learning has seen substantial advancements in various areas, such as computer vision \cite{dosovitskiy2020image,niemeyer2021giraffe}, natural language processing \cite{vaswani2017attention,radford2018improving,lewis2019bart,kenton2019bert}, and physical modeling \cite{RAISSI2019686,jagtap2020extended}. In physical modeling, machine learning methods are valuable tools for solving complex partial differential equations (PDEs). An emerging field within this domain is operator learning, which uses neural networks to learn mappings between infinite-dimensional spaces.

By utilizing the expressive power of neural networks, neural operators can be trained using datasets of input and output functions. Once trained, these operators can predict unseen equations with just a single forward computation. Compared to classical FDM/FEM methods, these ML-based methods greatly accelerate the equation-solving process.
In the field of operator learning, FNO \cite{li2021burigede} and DeepONet \cite{lu2021learning} are the most popular models. For a comprehensive overview, we refer to \cite{kovachki2021neural, hao2022physics, zappala2021operator} and the references therein.

Sobolev Training, introduced in \cite{czarnecki2017sobolev}, enhances traditional training methods by integrating the derivatives of the target function along with its values. This method improves the accuracy of predictions and boosts data efficiency and the model's generalization ability in function approximation. The authors of \cite{czarnecki2017sobolev} provide theoretical and empirical validation for these improvements.

A recent and significant application of Sobolev Training can be seen in Physics-Informed Neural Networks (PINNs). Sobolev-PINNs, introduced in \cite{son2021sobolev}, represent an adaptation of Sobolev Training. This is achieved by designing a novel loss function that directs the neural network to minimize errors within the corresponding Sobolev space. Empirical evidence has demonstrated that Sobolev-PINNs accelerate the convergence rate in solving complex PDEs. Furthermore, the authors of \cite{son2021sobolev} provide theoretical justifications for guaranteeing convergence in several equations.

Motivated by the previously mentioned works, we propose the Sobolev Training method for operator learning and provide theoretical and experimental support for its efficacy.

Here are the two significant challenges in our work. First, there is a need to approximate derivatives on irregular meshes to apply the Sobolev Training method to various existing models, as most recent results involve irregular meshes. 
To address this challenge, we adopt a local approximation method using moving least squares (MLS) within a coordinate system that is locally constructed. This system is derived from local principal component analysis (PCA) utilizing K-nearest neighbors (KNN), as developed in \cite{lipman2006error, liang2013solving}.

Second, the solution operator of PDEs exhibits non-local properties, which can be described using an integral operator. This approach requires additional work in theoretical analysis. Motivated by the work of \cite{tian2017analytical}, we provide a convergence analysis for a single layer with ReLU activation when the target function is described using an integral operation. 
We prove that including derivatives in the loss function improves convergence speed. To the best of our knowledge, this is the first work to present a convergence analysis in the context of operator learning, even without Sobolev Training. Furthermore, we show empirical evidence that PCgrad, introduced in \cite{yu2020gradient}, works well with the Sobolev Training method and enhances the performance of existing models.

Our contributions in this work are summarized as follows:
\begin{itemize}
    \item Integration of a derivative approximation algorithm with existing operator learning models.
    
    \item Presentation of the first theoretical analysis on convergence in the field of operator learning.
    
    \item Introduction of Sobolev Training within operator learning, supported by theoretical and empirical evidence.
\end{itemize}

\section{Preliminaries}
\subsection{Notations}\label{subsec:notation}
Let us begin with the notations that are used throughout this paper.
Given a multi-index $\alpha=(\alpha_1, \cdots, \alpha_n)\in (\mathbb{N}\cup \{ 0  \} )^{n}$ and a vector $x=(x^1, \cdots, x^n)\in \mathbb{R}^n$, we use $x^\alpha$ to denote
$$
x^\alpha = (x^1)^{\alpha_1}(x^2)^{\alpha_2}\cdots (x^n)^{\alpha_n}.
$$
We define $|\alpha|=\alpha_1+\alpha_2+\cdots +\alpha_n$. Let $\Omega$ be a bounded domain in $\mathbb{R}^n$. The space of continuously differentiable functions up to order $M \in \mathbb{N}$ on $\Omega$ is denoted by $C^M(\Omega)$.
\subsection{Sobolev spaces and Sobolev Training}
Sobolev spaces are fundamental 
function spaces that appear in analyzing PDEs and functional analysis.
For any $\varphi \in C^{|\alpha|}(\Omega)$, we denote  
$$
D^{\alpha}_x\varphi  = \frac{\partial^{\alpha_1}}{\partial x^1} \cdots
\frac{\partial^{\alpha_n}}{\partial x^n}\varphi.
$$ 
For a given  $u\in L^p(\Omega)$, if there exists $w\in L^p(\Omega)$ satisfyng  
$$
\int_{\Omega} u D_x^{\alpha} \xi  \,dx
= (-1)^{|\alpha|}
\int_{\Omega} w\xi\,dx \quad \forall \varphi \in C^\infty_0(\Omega), 
$$
then we refer $w$ as the $\alpha^{th}$-weak derivative of $u$ and denote  
$w = D_x^{\alpha} u$. Unless otherwise specified, we use weak derivatives throughout this paper. We refer to Chapter 5 of  \cite{evans2022partial} for detailed information regarding the Sobolev spaces.
For $k\in \mathbb{N}$ and $1\leq p<\infty $, we define a Sobole space $W^{k,p}(\Omega)$ by
$$
W^{k, p}(\Omega):=
\{u \in L^{p}(\Omega) :|D^{\alpha}_{x} u |\in L^p(\Omega) \ \ \forall |\alpha| \leq  k \}.
$$
The idea of training using Sobolev spaces is introduced in  \cite{czarnecki2017sobolev}. For a given function $f:\mathbb{R}^n \to \mathbb{R}^m$ with training set consisting of pairs ${(x_j,f(x_j))}_{j=1}^{J}$, training a neural network $g_{\theta}(x)$ consists of finding $\theta$ that minimizes
$$
\sum_{j=1}^{J} l(g_\theta(x_j), f(x_j)),
$$
for some given loss function  $l$. In Sobolev Training, the loss function above is replaced by 
$$
\sum_{j=1}^{J}
\left[l(g_\theta(x_j), f(x_j)) + \sum_{i=1}^{n}l_i(D_x^{e_i}g_\theta(x_j), D_x^{e_i} f(x_j))\right],
$$
where $D_x^{e_i}$ is the derivative of function in $i^{th}$ variable and 
 $l_i$ is a loss function measuring error for  each $i\in[1, \cdots, n]$. 

\subsection{Operator Learning} 
Operator learning is an emerging area of study in machine learning that employs neural networks to train solution operators for various equations. The DeepONet \cite{lu2021learning} and FNO \cite{li2021burigede} are considered fundamental models in the field of operator learning. FNO efficiently learns complex PDEs using Fourier analysis. 
At the same time, DeepONet is a model developed based on the Universal Approximation Theorem for operators that can handle many problems, including multiple input functions and complex boundary conditions. 
However, like other data-driven methods, DeepONet requires substantial data. Many efforts have been made to improve existing models. 
Let us introduce some of the notable developments in several studies. For example, Geo-FNO \cite{li2022fourier} is developed to
expand the FNO model to solve equations with irregular meshes in FNO. 
The transformer architecture, introduced in \cite{vaswani2017attention}, has achieved significant success in various fields such as natural language processing \cite{kenton2019bert, radford2018improving} and computer vision \cite{dosovitskiy2020image}. 
In \cite{cao2021choose}, the transformer is applied to operator learning, emphasizing the importance of selecting the appropriate transformer model. 
Recently, authors of  \cite{hao2023gnot} apply advanced machine learning techniques to transformer-based models. This approach enables handling diverse and complex datasets, resulting in more robust and scalable models.

\subsection{Results on the convergence analysis}  
The convergence analysis of gradient descent has been rigorously studied in deep learning research. Convergence studies, particularly for linear models, serve as fundamental building blocks in understanding the behavior of gradient descent in complex architectures. 
In the notable works \cite{arora2018convergence, bartlett2018gradient}, authors provide theoretical insights into linear models. Parallel to these, authors of  \cite{liu2020improved} investigate stochastic gradient methods with momentum (SGDM) and theoretically show improved convergence results compared to the usual stochastic gradient method. The convergence dynamics in two-layer neural networks with ReLU activation are explored extensively in  \cite{tian2017analytical, li2017convergence}. In particular, \cite{tian2017analytical} explores population gradients, where the inputs are drawn from zero-mean spherical Gaussian distributions, and establishes interesting theoretical results, including the properties of critical points and the probability of convergence to the optimal weights with randomly initialized parameters. 
Such comprehensive studies collectively enhance our understanding of the intricate convergence behaviors in more complicated and larger neural network structures.

\section{Proposed method}\label{sec:proposed_method}
This section introduces our method and theoretical result for the Sobolev Training for operator learning.

\subsection{Approximating derivatives in non-uniform meshes}\label{subsec:derivative_approx}
We adopt the reconstructing local surface approximation method using the moving least-square (MLS) method developed in  \cite{liang2013solving, lipman2006error}.

Let us assume that a function $u:\Omega \subset \mathbb{R}^n \to \mathbb{R}$ and a irregular mesh $\{x_j\}_{j=1}^{J} \subset \Omega $ are given.
We denote by $\Pi_{m}^{n}$
  the set of 
$n$-dimensional polynomial functions with a maximum degree less than or equal to 
$m$. For now, let us fix $\bar{x} \in \{x_j\}_{j=1}^{J}$. We then  find $K$-nearest points of $\bar{x}$ denoted by $\{\bar{x}_{1}, \bar{x}_{2}, \cdots, \bar{x}_{K}\} \subset \{x_j\}_{j=1}^{J}$. We intend to solve the following minimization problem
\begin{equation}\label{eq:minimize}
\min_{p \in \Pi_{m}^{n}} 
\sum_{k=1}^{K}\omega(\|\bar{x}_{k}-\bar{x}\|) \|p(\bar{x}_{k}) - u(\bar{x})\|^2,
\end{equation}
where $\omega(\cdot)$ is some weight function to be determined. In this work, we use the  \cref{weight_def}. 
It is directly checked that the number of the biasis $\Pi_{m}^{n}$  is $I := \frac{(m+n)!}{m!n!}$.
Let $b(x)$ represent a polynomial basis vector, with each element of $b(x)$  denoted by \( (x-\bar{x})^\alpha \) for any multi-index $\alpha$ satisfying $ |\alpha| \leq m$. 
We denote $c\in \mathbb{R}^{I}$ by the coefficient vector of dimension $I$, and $c_\alpha$ represents the coefficient corresponding to $(x-\bar{x})^\alpha$. 
Each element in $\Pi_{m}^{n}$ can be expressed as 
\begin{equation}\label{eq:polynomial}
p(x) = c^T b(x) = c \cdot b(x) 
= \sum_{|\alpha|\leq m} c_\alpha (x-\bar{x})^\alpha.
\end{equation}
By taking partial derivative in \cref{eq:minimize}, the coefficients can be determined by
\begin{equation}\label{eq:coefficient}
    \begin{aligned}
&c=E^{-1}\sum_{k=1}^{K}\omega(\|\bar{x}_{k}-\bar{x}\|)b(\bar{x}_{k})u(\bar{x}_{k}), \\
    &\text{with } E =\left(\sum_{k=1}^{K}\omega(\|\bar{x}_{k}-\bar{x}\|)b(\bar{x}_{k})b(\bar{x}_{k})^T\right).
    \end{aligned}
\end{equation}
Now, consider a set of data points $\{x_j, u(x_j)\}_{j=1}^{J}$. To solve the minimizing problem described in  \cref{eq:minimize}, we apply  \cref{eq:coefficient}, substituting $\bar{x}$ with each $x_j$ in the dataset. Subsequently, the approximation of $D_{x}^{\alpha}(x_j) u$ is represented by $c_{j, \alpha}$, which is the coefficient of $(x-x_j)^\alpha$ for each polynomial. We summarize overall in Algorithm \ref{alg:gradient_est}.

\begin{algorithm}[tb]
   \caption{Estimating derivatives in non-uniform meshes}
   \label{alg:gradient_est}
\begin{algorithmic}
    \STATE {\bfseries Define:} K: \text{number of nearest points}
    \STATE \phantom{{\bfseries Define:}} 
    m: \text{polynomial order degree}
   \STATE {\bfseries Input:} $\{(x_j, u(x_j)\}_{j=1}^{J}, K, m$ 
   \STATE Find $K$ nearest points for each  $x_j$:\\ 
   \quad  $x_{KNN, j }:=(x_{KNN, j , 1}, \cdots,  x_{KNN, j,  K})$.\\  
   \STATE Define a polynomial $p_j(x)$  for each $j$ defined by  \cref{eq:polynomial}
   \STATE Calculate  coefficient vector $c_j\in \mathbb{R}^I$ using \cref{eq:coefficient}
   \STATE {\bfseries Return:} coefficient vector $c_j$ for each $i$.
\end{algorithmic}
\end{algorithm}

In the following lemma, we prove that the result of Algorithm \ref{alg:gradient_est} indeed approximates the derivative of the target function in the Sobolev space. 
To this end, we assume that the data points $\{x_j\}_{j=1}^{J}$ are uniformly sampled so that
\begin{equation}\label{eq:approx_}
\frac{1}{J} \sum_{j=1}^{J} |f(\tilde{x}_i)|^2 \stackrel{C_0}{\approx}\frac{1}{|\Omega|}\int_{\Omega} |f|^2 \,dx ,
\end{equation}
where $C_0>0$ is some universal constant. In this context, for each $j$, $\tilde{x}_j$ denotes any point near $x_j$, satisfying the condition $\|x_j - \tilde{x}_j\| \leq \varepsilon$, where $\varepsilon > 0$ is sufficiently small. For more details on the Monte-Carlo integration method, refer to \cite{press2007numerical, newman1999monte}.

\begin{lemma}\label{lem:grad_convergence}
Suppose $u \in W^{M,2}(D)$ for some $M \in \mathbb{N}$, and consider a set of grid points $\{x_j\}_{j=1}^{J}$  satisfing the Monte-Carlo approximation assumption \cref{eq:approx_} for all $D^\alpha_x u$ with $|\alpha| \leq M$ and for some $C_0 > 0$. Define 
$$
h = \min_{j,k} \|x_j - x_{\text{KNN}, j,k}\|,
$$
and let $c_j$ be the coefficient vector obtained from Algorithm \ref{alg:gradient_est}. 
Then, for all $m \leq M$, there exists a constant $C = C(C_0, N, K, m, M, |\Omega|)$ such that the following inequality holds:
\[
\frac{1}{J}
\sum_{j=1}^{J}\sum_{|\alpha| = m} \|c_{j,\alpha} - D^{\alpha} u(x_j)\|^2 \leq C h^{M-m} \|u\|_{W^{M,2}(\Omega)}.
\]
In this context, we use $c_{j, \alpha}$ to denote the element of $c_j$ from Algorithm \ref{alg:gradient_est}, corresponding to the $(x - x_j)^\alpha \in \Pi_{m}^{n}$. 
\end{lemma}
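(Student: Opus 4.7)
The plan is to work locally around each grid point $x_j$, exploit the optimality of the weighted least-squares minimizer, and then use the Monte-Carlo assumption to glue the local estimates into the claimed global discrete bound.

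First, for a fixed $x_j$, I would compare the MLS polynomial $p_j \in \Pi_m^n$ with the truncated Taylor polynomial of $u$ of degree $m$ at $x_j$, namely
\[
T_m u(x) = \sum_{|\alpha|\leq m} \frac{D^\alpha u(x_j)}{\alpha!}(x-x_j)^\alpha.
\]
Since $T_m u \in \Pi_m^n$ and $p_j$ is the minimizer in \cref{eq:minimize}, we have
\[
\sum_{k=1}^{K} \omega(\|\bar x_k - x_j\|) |p_j(\bar x_k) - u(\bar x_k)|^2 \;\leq\; \sum_{k=1}^{K} \omega(\|\bar x_k - x_j\|) |T_m u(\bar x_k) - u(\bar x_k)|^2.
\]
On a neighborhood of radius $\lesssim h$, the integral form of the Taylor remainder together with a Bramble--Hilbert type argument controls $|T_m u(x) - u(x)|$ in $L^2$ by $h^{M-m}|u|_{W^{M,2}(B_h(x_j))}$ up to absolute constants depending on $m,M,n$. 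Evaluating at the $K$ neighbors $\bar x_k$ and applying the Monte-Carlo hypothesis \cref{eq:approx_} to the $M$-th order derivatives inside $B_h(x_j)$ transfers the local integral bound to the discrete sample bound needed on the right-hand side.

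Next, I need to pass from the residual bound to a coefficient bound. Writing $q_j := p_j - T_m u \in \Pi_m^n$, the previous step gives a weighted $\ell^2$ control on $q_j$ at the $K$ nearest neighbors. Because the neighbors form a (locally) unisolvent set for $\Pi_m^n$, an equivalence between the weighted discrete seminorm and the coefficient norm of polynomials on a patch of size $h$ holds; more precisely, the normal matrix $E$ from \cref{eq:coefficient} satisfies $\lambda_{\min}(E) \gtrsim h^{2m}$ (with a constant depending on $K,m,n$ and the mesh regularity implicit in the Monte-Carlo assumption). Inverting this relation converts the residual bound into an estimate of the form $|c_{j,\alpha} - D^\alpha u(x_j)/\alpha!|^2 \lesssim h^{2(M-m)} |u|_{W^{M,2}(B_h(x_j))}^2$ for $|\alpha| = m$, absorbing the factorial into the constant.

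Finally, I would sum these local estimates over $j$ and invoke the Monte-Carlo assumption \cref{eq:approx_} one more time, now applied to $|D^\alpha u|^2$ for $|\alpha| = M$, to bound
\[
\frac{1}{J}\sum_{j=1}^{J}\sum_{|\alpha|=m} |c_{j,\alpha} - D^\alpha u(x_j)|^2 \;\lesssim\; h^{2(M-m)}\,\frac{1}{J}\sum_{j=1}^{J} |u|_{W^{M,2}(B_h(x_j))}^2 \;\lesssim\; h^{2(M-m)} \|u\|_{W^{M,2}(\Omega)}^2,
\]
after which taking a square root yields the stated bound (with possibly an adjusted exponent that the authors absorb into $C$).

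The main obstacle I anticipate is the quantitative control of $\lambda_{\min}(E)$: bounding it from below by $h^{2m}$ requires that the $K$ nearest neighbors of each $x_j$ form a sufficiently well-distributed, unisolvent configuration at scale $h$. This is precisely where mesh quality enters and where the Monte-Carlo sampling assumption must be used carefully, since a purely pointwise statement about nearest-neighbor distance $h$ does not by itself guarantee unisolvency. Once this geometric/linear-algebraic lemma is in place, the Taylor/Bramble--Hilbert steps and the final summation are essentially bookkeeping.
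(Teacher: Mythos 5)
Your route is genuinely different from the paper's. The paper does not use quasi-optimality at all: it invokes the exact error representation of Theorem~3.1 in Lipman et al., which writes $c_{j,\alpha}-D^\alpha_x u(x_j)$ directly as a sum of $M$-th order derivatives at intermediate points weighted by ratios of determinants $\det(A_{j,\tilde j\leftarrow w_k b_j(x_k)})/\det(A_j)$, and then cites Remark~4 of Liang--Zhao for the asymptotics $\det(A)\approx O(h^{2s})$ to bound that ratio by $Ch^{-|\alpha_{\tilde j}|}$; squaring, averaging, and applying the Monte-Carlo hypothesis finishes the $C^M$ case, with a final density argument to pass to $W^{M,2}$. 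Your ``compare with the Taylor polynomial, then invert the normal matrix'' argument is the other standard way to prove MLS error bounds, and your closing observation is exactly right and applies to the paper as well: both routes ultimately rest on an unproven quantitative unisolvency statement for the $K$ nearest neighbors (your $\lambda_{\min}(E)\gtrsim h^{2m}$ is the same assumption the paper imports via the determinant asymptotics), which the pointwise definition of $h$ and the Monte-Carlo hypothesis do not by themselves supply.

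There is, however, a concrete gap in your bookkeeping. The degree-$m$ Taylor remainder satisfies $\|u-T_m u\|_{L^2(B_h(x_j))}\lesssim h^{m+1}|u|_{W^{m+1,2}(B_h(x_j))}$ (or $h^{M}|u|_{W^{M,2}}$ only if you compare against the degree-$(M-1)$ polynomial); it is not controlled by $h^{M-m}|u|_{W^{M,2}}$ as you assert. Moreover, your own two intermediate claims are inconsistent with your conclusion: a residual of size $h^{M-m}$ divided through $\lambda_{\min}(E)^{1/2}\gtrsim h^{m}$ yields a coefficient error of order $h^{M-2m}$, not $h^{M-m}$. The chain only closes if the residual is $O(h^{M})$, which forces the comparison polynomial to have degree $M-1$ --- and when $m<M-1$ that polynomial does not lie in $\Pi_m^n$, so the quasi-optimality step is unavailable and the rate degrades to $O(h)$ for the top-order coefficients regardless of the smoothness $M$. (This tension is present in the lemma's statement itself, and the paper sidesteps it by summing over $|\alpha|=M$ in its error representation, i.e.\ by implicitly working with a basis rich enough to see order-$M$ remainders.) Finally, the discrepancy between $c_{j,\alpha}\approx D^\alpha u(x_j)/\alpha!$ and the target $D^\alpha u(x_j)$ is not a constant you can absorb: if the algorithm's coefficients converge to $D^\alpha u/\alpha!$, then $c_{j,\alpha}-D^\alpha u(x_j)$ does not tend to zero for $\alpha!\neq 1$, so the normalization must be fixed in the statement of the estimate rather than hidden in $C$.
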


The proof can be found in  Appendix \ref{appendix:prove_derivative_lem}
\subsection{Convergence analysis for the Sobolev  Training for operator learning}\label{subsec:convergence_analysis}
This subsection provides theoretical evidence of the effectiveness of Sobolev Training in operator learning.
Suppose that the query points $\{x_j\}_{j=1}^{J}$ is given from spherical Gaussian distribution, $\mathcal{N}(0,I)$.
Suppose $\{u_k(x), v_k(x)\}_{k=1}^{N}$ is a pair of input and output real-valued functions. 
To capture non-local properties of the solution operator of PDEs, the solution can be described as an integral operator.
We further assume that there exists $\{y_l\}_{l=1}^{\tilde{J}}$ such that  
\begin{equation}\label{eq:u_represent}
\begin{aligned}
u_k(x) &=\frac{1}{|\Omega|} \int_{\Omega} \kappa(x,y) v_k(y) \,dy\\
&\approx \frac{1}{\tilde{J}} \sum_{l=1}^{\tilde{J}} \kappa(x,y_l) v_k(y_l)
\end{aligned}
\end{equation}
for some kernel function $\kappa(x,y)$.
The kernel function $\kappa(x,y)$ can be represented as 
\begin{equation}\label{eq:kappa_multiply}
\kappa(x,y) = \sum_{\tilde{i}=1}^{\tilde{I}} \varphi^{\tilde{i}}(x)\psi^{\tilde{i}}(y),
\end{equation}
where each $\varphi^{\tilde{i}}$ and $\psi^{\tilde{i}}$ is a neural network. 
For simplicity, we assume that   
\begin{equation}\label{eq:kappa_assume}
\tilde{I}=1 \quad \text{and} \quad
    \varphi(x)=\psi(x)= \sigma(w^Tx)
\end{equation} for some parameter $w \in \mathbb{R}^{n}$ and the activation function, $\sigma$ is the ReLu function.
By the Universal Approximation Theorem \cite{lu2017expressive}, a two-layer network can approximate each $v_k\in W^{1,2}(\Omega)\subset L^2(\Omega)$. For the analytic simplicity,  we suppose that 
\begin{equation}\label{v_assume}
v_k(x) = \mu_k\sigma((w^*)^Tx)\quad \text{for some }w^*\in \mathbb{R}^n.
\end{equation}

From these assumptions above, we have
$$
u_k(x;w) =
\frac{1}{\tilde{J}}
\sum_{l=1}^{\tilde{J}} \mu_k \sigma(w^Tx)\sigma(w^Ty_l)\sigma((w^*)^Tx).
$$
By denoting $X = [x_1, \cdots, x_J] \in \mathbb{R}^{J\times n}$, the usual $L^2$ loss function can be expressed as
\begin{equation}\label{L2_loss}
    \mathcal{L}_{L^2} = \frac{1}{N}\sum_{k=1}^{N} 
 \left| u_k(X;w)
 - 
 u_k(X;w^*) \right|^2.
\end{equation}
For the Sobolev Training, we  denote
\begin{equation}\label{L2_der_loss}
    \mathcal{L}_{Der} = \frac{1}{N}\sum_{k=1}^{N} 
 \left| D_x u_k(X;w)
 - 
 D_x u_k(X;w^*) \right|^2
\end{equation}
and use the loss function
\begin{equation}\label{sob_loss}
\begin{aligned}
    \mathcal{L}_{Sob} &= \mathcal{L}_{L^2}+ \mathcal{L}_{Der}.
 \end{aligned}
\end{equation}
For each training method given the initial parameter $w_0 \in \mathbb{R}^n$, the parameters $w$ are updated via the following gradient descent  method for some $t>0$
\begin{equation}\label{eq:gradient_descent}
\begin{aligned}
w_{L^2} &= w_0 - t \mathbb{E}_{X}\left[\frac{\partial \mathcal{L}_{L^2}}{\partial w}\right], \\
w_{Sob} &= w_0 - t\mathbb{E}_{X}\left[\frac{\partial \mathcal{L}_{Sob}}{\partial w}\right].
\end{aligned}
\end{equation}
By taking $t\to 0$ in \cref{eq:gradient_descent}, we have the following gradient flow:
\begin{equation}\label{eq:gradient_flow}
\begin{aligned}
    \frac{dw_{L^2}}{dt} &=  -\mathbb{E}_{X}\left[\frac{\partial \mathcal{L}_{L^2}}{\partial w}\right], \\
     \frac{dw_{Sob}}{dt} &= -\mathbb{E}_{X}\left[\frac{\partial \mathcal{L}_{Sob}}{\partial w}\right].
    \end{aligned}
\end{equation}

Now, we are ready to present our main theorem.
\begin{theorem}\label{lem:convergence_analysis}
Suppose that we are under the assumption \cref{eq:u_represent,eq:kappa_multiply,eq:kappa_assume,v_assume} and the activation function, $\sigma(x)$ is $ReLU(x)=\max(0,x)$. 
Also, let $w_{L^2}$ and $w_{Sob}$ be parameter vectors defined as \cref{eq:gradient_flow}.
Then, we have the  following results:
\begin{itemize}
    \item If 
    $w_0 := w(0) \in \{w:\|w_{L^2}-w^{*}\|<\|w^*\|\}$, then $$
    \frac{d}{dt} \|w_{L^2}-w^{*}
\|^2 <0.
    $$ 
    Moreover, we have the convergence results 
    $w_{L^2} \to w^*$ as $t \to\infty$.

    \item 
Under the same assumption as in the previous statement, we have
\begin{align*}
&\frac{d}{dt} 
 \|w_{Sob} -w^* \|^2\leq 
\frac{d}{dt} 
\|w_{L^2} -w^* \|^2< 0.
\end{align*}
In particular, if $w$ and $w^*$ is not parallel, that is  if the angle between $w$ and $w^*$ is not $0$ and $\pi$, then we have 
\begin{equation}\label{eq:strict_inequality}
\frac{d}{dt} 
\|w_{Sob} -w^* \|^2< 
\frac{d}{dt} 
\|w_{L^2} -w^* \|^2 < 0.
\end{equation}
\end{itemize}
\end{theorem}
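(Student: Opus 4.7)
The plan is to reduce both gradient flows to population-level calculations using the spherical Gaussian hypothesis on $\{x_j\}$ together with the closed-form Gaussian--ReLU identities of \cite{tian2017analytical}. Substituting \cref{eq:u_represent,eq:kappa_multiply,eq:kappa_assume,v_assume} into the loss functions and taking $\mathbb{E}_X$, I would express $\mathbb{E}_X[\partial_w \mathcal{L}_{L^2}]$ and $\mathbb{E}_X[\partial_w \mathcal{L}_{Der}]$ in closed form as functions only of $\|w\|$, $\|w^*\|$, and the angle $\theta$ between $w$ and $w^*$. The main identities are
\[
\mathbb{E}[\sigma(a^T x)\sigma(b^T x)] = \tfrac{\|a\|\|b\|}{2\pi}\bigl(\sin\theta + (\pi - \theta)\cos\theta\bigr),
\]
and the analogous formula for $\mathbb{E}[\mathbb{1}\{a^T x > 0\}\mathbb{1}\{b^T x > 0\} x x^T]$, which is needed to handle the $\sigma'(w^T x) = \mathbb{1}\{w^T x > 0\}$ factors appearing in $\mathcal{L}_{Der}$.

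For the first bullet, I would compute
\[
\frac{d}{dt}\|w_{L^2} - w^*\|^2 \;=\; -2(w_{L^2} - w^*)^T \,\mathbb{E}_X[\partial_w \mathcal{L}_{L^2}],
\]
and, after expanding with the first identity above, show the right-hand side factors as $-g(\|w\|,\|w^*\|,\theta)$ with $g$ strictly positive on $\{w : \|w - w^*\| < \|w^*\|\}$. The ball condition forces $\theta \in [0,\pi/2)$ and keeps $\|w\|$ bounded away from $0$, which is exactly what is needed to control the sign of the trigonometric factor $\sin\theta + (\pi - \theta)\cos\theta$ and of its $w$-derivative. Global convergence $w_{L^2}\to w^*$ then follows because the ball is forward-invariant and $\|w_{L^2}-w^*\|^2$ is a strict Lyapunov function whose unique zero inside is $w^*$.

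For the second bullet, I would use the decomposition $\partial_w \mathcal{L}_{Sob} = \partial_w\mathcal{L}_{L^2} + \partial_w \mathcal{L}_{Der}$, so it suffices to prove
\[
(w - w^*)^T \,\mathbb{E}_X[\partial_w \mathcal{L}_{Der}] \;\ge\; 0,
\]
with strict inequality when $w$ and $w^*$ are not parallel. Evaluating this inner product via the indicator-Gaussian identity produces an expression whose trigonometric part is a positive multiple of $\sin\theta$ or $(\pi-\theta)\sin\theta$ (with positive prefactors in $\|w\|$ and $\|w^*\|$), vanishing exactly when $\theta\in\{0,\pi\}$. Adding this to the $L^2$ term yields both the non-strict inequality for general $w$ and the strict inequality \cref{eq:strict_inequality} in the non-parallel case.

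The hard part is the derivative loss. Because $\partial_w \mathcal{L}_{Der}$ contains cross-terms of the form $w\,\mathbb{1}\{w^T x > 0\}\,\mathbb{1}\{(w^*)^T x > 0\}\, x x^T w^*$, computing their Gaussian expectation reduces to evaluating orthant integrals in the two-dimensional plane spanned by $w$ and $w^*$; most of the bookkeeping will live in this step. Once these moments are expressed as trigonometric functions of $\theta$ the remaining sign analysis is short, so the technical heart of the argument is carrying out these ReLU moment computations cleanly and verifying that the $L^2$ and derivative contributions align in sign rather than cancel.
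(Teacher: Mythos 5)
Your outline follows the same route as the paper: substitute the structural assumptions, use the Gaussian--ReLU moment identities of \cite{tian2017analytical} (including the orthant probability $\tfrac14+\tfrac{1}{2\pi}\arcsin\rho$ for the product of indicators) to reduce $\mathbb{E}_X[\partial_w\mathcal{L}_{L^2}]$ and $\mathbb{E}_X[\partial_w\mathcal{L}_{Der}]$ to closed-form functions of $\|w\|$, $\|w^*\|$ and $\theta$, and then do a sign analysis. For the $L^2$ part this works, though your emphasis is slightly off: the paper shows the dissipation is $\le 0$ for \emph{all} $\theta\in[0,\pi]$ (via the factorization $(t-1)^2(t+1)\ge 0$ with $t=\|w\|/\|w^*\|$ and the inequality $g(\theta)=(\cos\theta-1)((\pi-\theta)(\cos\theta-1)+2\sin\theta)\le 0$); the ball $\{\|w-w^*\|<\|w^*\|\}$ is needed only to exclude the degenerate stationary point $w=-w^*$ and to get forward invariance, not to "control the sign of the trigonometric factor." Also note that the kernel integral over $y$ leaves a $w$-dependent prefactor $\alpha(w,w^*)$ multiplying $\sigma(w^Tx)$, so $u_k(X;w)$ is a \emph{product} of two $w$-dependent factors and both must be differentiated; this is where the extra rank-one term $C(w/\|w\|,w^*)w^T$ enters, and your plan does not account for it.

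The genuine gap is in the second bullet. You assert that $(w-w^*)^T\mathbb{E}_X[\partial_w\mathcal{L}_{Der}]$ "produces an expression whose trigonometric part is a positive multiple of $\sin\theta$ or $(\pi-\theta)\sin\theta$, vanishing exactly when $\theta\in\{0,\pi\}$." That is not what the computation yields. Carrying out the moment calculations gives, up to the positive factor $\phi_0(w,w^*)\|w\|\|w^*\|^5$, a cubic $M_3t^3-M_2t^2-M_1t+M_0$ in $t=\|w\|/\|w^*\|$ whose coefficients are mixed-sign combinations of $\phi_0,\phi_1,\phi_2$ and $\cos\theta$; individual terms such as $-\|w\|^3\|w^*\|^3\phi_0^2\cos\theta$ are strictly negative for small $\theta$, so positivity of the sum is a delicate cancellation, not a manifestly positive multiple of $\sin\theta$. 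Establishing nonnegativity of this cubic reduces to the inequality $h(\theta)\ge 0$ in \cref{eq:inequality_to_show}, which the paper itself declares "excessively technical" and only verifies numerically. Your proposal treats precisely this step as if it were immediate, so as written the argument for $\frac{d}{dt}\|w_{Sob}-w^*\|^2\le\frac{d}{dt}\|w_{L^2}-w^*\|^2$ (and a fortiori the strict inequality \cref{eq:strict_inequality}) is not established; you would need either to carry out the orthant-moment bookkeeping and then prove the resulting cubic inequality, or to acknowledge, as the paper does, that this inequality is assumed/numerically verified.
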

The proof can be found in Appendix \ref{appendix:convergence_analysis}

\subsection{Sobolev Training with PCGrad}\label{subsec:PCGrad}


The loss function, \cref{sob_loss}, takes the form of a linear summation of two terms, $\mathcal{L}_{L^2}$ and $\mathcal{L}_{Der}$. It can be seen as a double-task objective, encompassing the prediction of $u$ and its derivatives. Learning two tasks simultaneously may lead to instability during training, especially when there are conflicting gradients—indicating gradients with opposite directions— or a substantial difference in gradient magnitudes \cite{yu2020gradient}.

To address these issues, we adopt PCGrad \cite{yu2020gradient} during training, a technique designed to address multi-task learning scenarios. Here, we briefly explain how to apply PCGrad to our framework.

Let $\bg_{1}:=\partial \mathcal{L}_{L^2} / \partial w$ and $\bg_{2}:=\partial \mathcal{L}_{Der} / \partial w$, respectively. We say $\bg_{1}$ and $\bg_{2}$ \textit{are in conflict} if they are in opposite directions, namely $\bg_{1} \cdot \bg_{2} < 0$. When $\bg_{1}$ and $\bg_{2}$ are not in conflict, the parameters are updated by using the gradient $\bg:=\bg_{1}+\bg_{2}$ as usual. If $\bg_{1}$ and $\bg_{2}$ are in conflict, however, we replace $\bg_{2}$ with $\bg_{2} - (\bg_{1} \cdot \bg_{2} / |\bg_{1}|^{2})\bg_{1}$, essentially subtracting the projection of $\bg_{2}$ onto $\bg_{1}$ from $\bg_{2}$ or vice versa. This subtraction of the projection ensures that $\bg_{1}$ and $\bg_{2}$ are no longer in conflict, resulting in a more stable update of parameters.

In \cref{subsubsec:Ablation_PCGrad}, we conduct an ablation study for the presence and absence of PCGrad in our framework. Specifically, we compare the cases of directly optimizing \cref{sob_loss} and applying PCGrad. It is experimentally confirmed that applying PCGrad performs better than not applying it. More details are in \cref{subsubsec:Ablation_PCGrad}.

\section{Experiments}\label{sec:Experiments}
In this section, we present a variety of experiments to demonstrate the effectiveness of the proposed framework. We compare the performance between the presence and absence of our framework. All the experiments are performed utilizing PyTorch 1.18.0 on Intel(R) Core(TM) i9-10980XE CPU at 3.00GHz and Nvidia GA102GL [RTX A5000].

\paragraph{Baseline models}\label{para:Baseline_Models} We perform experiments on four baselines models: FNO \cite{li2021burigede}, Galerkin Transformer \cite{cao2021choose}, GeoFNO \cite{li2022fourier}, and GNOT \cite{hao2023gnot} with following datasets. We set $K=20$ and $m=2$, respectively. More details in the choice of $K$ and $m$ are described in \cref{subsubsec:Ablation_K,subsubsec:poly_order}. We always set all other implementation details, e.g., each model's number of hidden layers and selections of activation functions, to their default settings.

\paragraph{Datasets}\label{para:Datasets} We adopt six benchmark dataset as following: Darcy2d \cite{li2021burigede}, NS2d \cite{li2021burigede}, NACA \cite{li2022fourier}, Elasticity \cite{li2022fourier}, NS2d-c \cite{hao2023gnot}, and Heat \cite{hao2023gnot}.

Please refer to \cite{li2021burigede,cao2021choose,li2022fourier,hao2023gnot} for more details in baselines and models.

\paragraph{Error Criteria} \label{para:Err_Cri}
 In this section, we present an error analysis, adopting the relative errors in $L_2$-norm as the error criteria, formulated as follows: 
\begin{equation*}
    \frac{1}{{N}_{D}} \sum\limits_{i=1}^{{N}_{D}} \frac{{\|\bu^{NN}_{i} - \bu_{i}\|}_{2}}{{\|\bu_{i}\|}_{2}},
\end{equation*}
where ${N}_{D}$ is the dataset size, $\bu^{NN}_i$ and $\bu_i$ represent the predicted solutions and the target solutions for $i$th data, respectively.

\subsection{Main Results}\label{subsec:MainResults}
\begin{table*}[t]
\caption{The main results for applying our frameworks on operator learning for a variety of experiments. \textsc{Variables} indicate the physical quantities to predict. The numerical results are the errors before and after applying our framework (\textit{before} $\rightarrow$ \textit{after}). The dashes indicate that the model is not available to handle the dataset.}
\label{table:MainResults}
\vskip 0.15in 
\begin{center}
\begin{small}
\begin{sc}
\begin{tabular}{lccccc}
\toprule
Dataset & Variables & FNO & Geo-FNO$^{\mathbf{\mathrm{A}}}$  & \begin{tabular}[c]{@{}c@{}}Galerkin$^{\mathbf{\mathrm{B}}}$ \\ Transformer\end{tabular} & GNOT$^{\mathbf{\mathrm{B}}}$\\
\midrule
Darcy2d$^{\mathbf{\mathrm{C}}}$    & $u$  &1.09e-2 $\rightarrow$ \textbf{8.25e-3} & {1.09e-2 $\rightarrow$ \textbf{8.25e-3}} & {9.77e-3 $\rightarrow$ \textbf{8.87e-3}} & 1.07e-2 $\rightarrow$ \textbf{9.02e-3} \\
    &  $u$  &1.09e-2 $\rightarrow$ \textbf{8.27e-3} & {1.09e-2 $\rightarrow$ \textbf{8.27e-3}} & {9.44e-3 $\rightarrow$ \textbf{8.98e-3}} & 1.07e-2 $\rightarrow$ \textbf{9.41e-3} \\
    &  $u$  &1.09e-2 $\rightarrow$ \textbf{8.85e-3} & {1.09e-2 $\rightarrow$ \textbf{8.85e-3}} & {1.02e-2 $\rightarrow$ \textbf{9.79e-3}} & 1.08e-2 $\rightarrow$ \textbf{9.85e-3} \\\\
NS2d    & $w$ &1.56e-1 $\rightarrow$ \textbf{1.12e-1} & 1.56e-1 $\rightarrow$ \textbf{1.12e-1} & 1.40e-1 $\rightarrow$ \textbf{7.53e-2} & 
1.38e-1 $\rightarrow$ \textbf{1.01e-1} \\\\
NACA$^{\mathbf{\mathrm{D}}}$    & $v$  &4.21e-2 $\rightarrow$ \textbf{3.73e-2} & 1.38e-2 $\rightarrow$ \textbf{1.17e-2} & 1.94e-2 $\rightarrow$ \textbf{1.67e-2} & 7.57e-3 $\rightarrow$ \textbf{6.00e-3}     \\\\

Elasticity$^{\mathbf{\mathrm{D}}}$   & $\sigma$ &5.08e-2 $\rightarrow$ \textbf{4.51e-2} & 2.34e-2 $\rightarrow$ \textbf{1.57e-2} & 2.31e-2 $\rightarrow$ \textbf{1.71e-2} & 
1.36e-2 $\rightarrow$ \textbf{1.11e-2} \\\\
NS2d-c$^{\mathbf{\mathrm{D}}}$    & $u$ &6.28e-2 $\rightarrow$ \textbf{4.35e-2} & 1.41e-2 $\rightarrow$ \textbf{9.66e-3} & {1.41e-2 $\rightarrow$ \textbf{1.10e-2}} & 1.82e-2   $\rightarrow$ \textbf{1.16e-2} \\
    & $v$ &1.18e-1 $\rightarrow$ \textbf{9.52e-2} & 2.98e-2 $\rightarrow$ \textbf{1.91e-2} &  2.74e-2 $\rightarrow$ \textbf{2.18e-2} & 2.92e-2$\rightarrow$ \textbf{1.70e-2}  \\
    & $p$ &1.14e-2 $\rightarrow$ \textbf{9.74e-3} & 1.62e-2 $\rightarrow$ \textbf{1.07e-2} & 1.88e-2 $\rightarrow$ \textbf{1.38e-2} & 2.66e-2$\rightarrow$ \textbf{1.45e-2}  \\\\
Heat    &$u$ &-& - & - & 4.31e-2 $\rightarrow$ \textbf{3.22e-2} \\

\bottomrule
\end{tabular}
\end{sc}
\end{small}
\end{center}
\vskip -0.1in
\end{table*}

The main results for experiments are described in \cref{table:MainResults}. The results include both before and after applying our framework, described in the form of \textit{before} $\rightarrow$ \textit{after}. For example, the error decreases from 1.09\% to 0.827\% for FNO on the Darcy2d dataset with $141\times141$ grids. The following are several notes for superscripts in \cref{table:MainResults}:
\begin{itemize}
    \item \textbf{\textbf{(A)}} Since Geo-FNO is identically equal to the ordinary FNO on the dataset, which consists of uniform grids, it has the same performance as FNO on Darcy2d and NS2d dataset.
    \item \textbf{\textbf{(B)}} Since the code is not fully reproducible, including the configuration of the dataset, we used our own implementation so that the numerical results might be slightly different from the original one.
    \item  \textbf{\textbf{(C)}} We measure the error on various resolutions of the Darcy2d dataset. From top to bottom, the resolutions are 211, 141, and 85, respectively.
    \item  \textbf{\textbf{(D)}} Since FNO and Galerkin Transformer are only available for datasets with uniform grids, we apply interpolations on NACA, Elasticity, and NS2d-c dataset when we use these models. Specifically, interpolations of NACA and Elasticity dataset are available in \cite{li2022fourier}. NS2d-c dataset is interpolated with $141\times141$ grids.
\end{itemize}
 
Based on the main results, our framework substantially effectively reduces errors across diverse models and datasets. Specifically, the error has decreased by more than 30\% on some tasks. 
This implies that our framework can be applied across a wide range of operator learning tasks, enhancing performance, which is one of the most important advantages of our framework.

\subsection{Further Experiments}\label{subsec:FurtherExperiments}
In this section, we discuss a deeper analysis of several properties of our frameworks with additional experiments as follows. Unless otherwise
specified, we use FNO with the Darcy2d dataset, discretized by $141 \times 141$ grids. In this section, the terminology \textit{Ordinary} implies the framework of the basic training framework, optimizing \cref{L2_loss} only.

\subsubsection{Robustness against Noise}\label{subsubsec:Robust_Noise}
We first investigate the robustness of our approach in the presence of varying noise intensities. We inject i.i.d. Gaussian noise $\mathcal{N}(0, \sigma^2I)$ noise to each target data while training, where $\sigma$ denotes the noise intensity. We set $\sigma$ to be adjusted proportionately to the target data range. Specifically, we vary $\sigma$ from 1.5\% to 3.0\% of the difference between the maximum and minimum of $u$. We repeat five times for generating noise per each $\sigma$. We compare three frameworks: ours, the ordinary method, and the ordinary method incorporating Finite Difference Methods (FDM). The results are described in \cref{table:NoiseDarcy2d}. As outlined in \cref{table:NoiseDarcy2d}, our framework demonstrates superior performance in all scenarios, indicating its remarkable robustness against noise. Specifically, the ordinary method has been identified as highly susceptible to noise. In particular, as $\sigma$ increases, the error more than doubles ($1.09\% \rightarrow 2.26\%$). We find that applying Sobolev Training results in much more stable learning against the presence of noise. We also note that utilizing our framework in Sobolev Training leads to much better performance than simply applying FDM.

\begin{table}[t]
\caption{Quantitative results on the Darcy2d dataset with noise are presented. The experiments are repeated five times for each value of $\sigma$. The average errors are described.}
\label{table:NoiseDarcy2d}
\vskip 0.15in
\begin{center}
\begin{small}
\begin{sc}
\begin{tabular}{cccc}
\toprule
\begin{tabular}[c]{@{}c@{}}Noise \\ Intensity\end{tabular} & Ordinary & Ours & \begin{tabular}[c]{@{}c@{}}Ordinary \\ + FDM\end{tabular}\\
\midrule
0.00\%  & 1.09e-02  & \textbf{8.27e-03} & 9.67e-03 \\
1.50\%  & 1.29e-02  & \textbf{9.04e-03} & 9.76e-03 \\
2.00\%  & 1.60e-02  & \textbf{9.28e-03} & 1.01e-02 \\
2.50\%  & 1.94e-02  & \textbf{9.30e-03} & 1.03e-02 \\
3.00\%  & 2.26e-02  & \textbf{9.62e-03} & 1.06e-02 \\
\bottomrule
\end{tabular}
\end{sc}
\end{small}
\end{center}
\vskip -0.1in
\end{table}

\subsubsection{Hyperparameter Selection for $K$}\label{subsubsec:Ablation_K}

\begin{figure}[t]
\vskip 0.2in
\begin{center}
\centerline{\includegraphics[width=0.85\columnwidth]{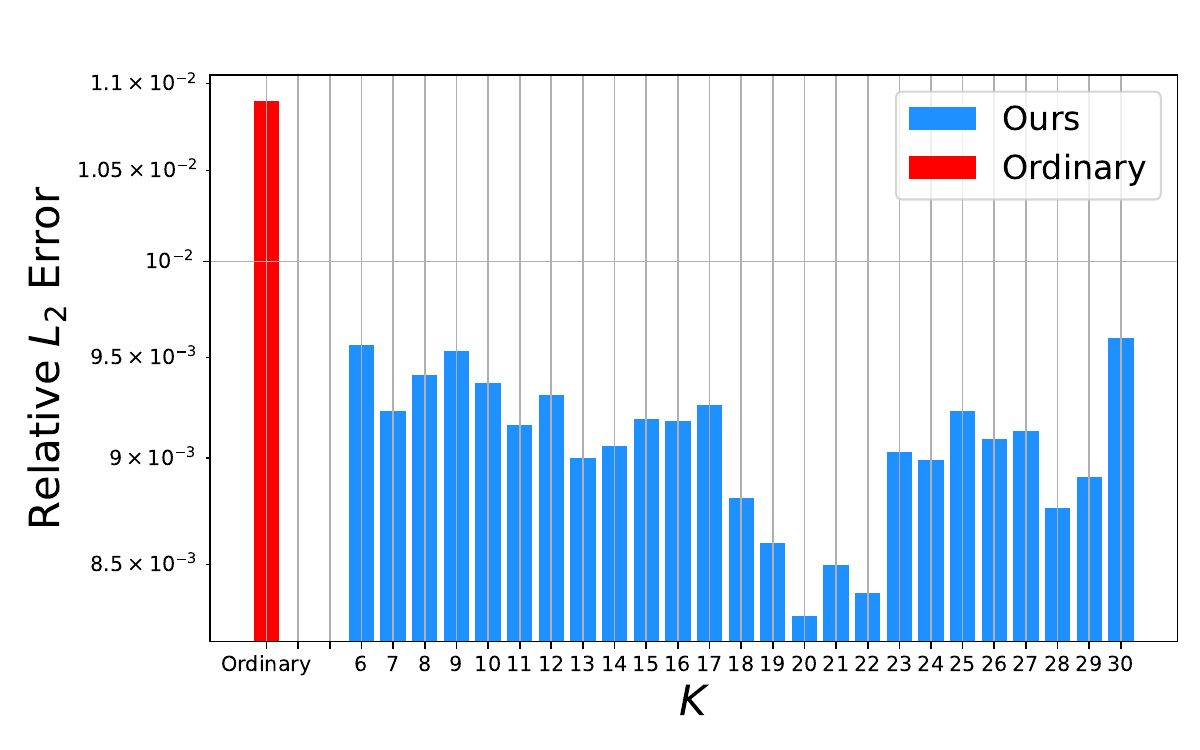}}
\caption{The relative error for each $K$ from 6 to 30. The y-axis is the relative $L_2$ error; the x-axis is the value of $K$.
}
\label{fig:RelativeError_K}
\end{center}
\vskip -0.2in
\end{figure}



$K$ is one of the most important hyperparameters in our framework. We investigate how the relative error varies depending on the value of $K$. We vary the value of $K$, recording the corresponding errors in \cref{fig:RelativeError_K}. As illustrated in \cref{fig:RelativeError_K}, our framework performs best when $K=20$. Around $K=20$, it is experimentally confirmed that the performance worsens if $K$ increases or decreases.

This result can be interpreted from the perspective of the receptive field. Before addressing that point, we note that the derivative is a kind of \textit{local} property. The derivative of a function at a query point depends only on its local neighborhoods, and $K$ determines the range of such neighborhoods. Thus, the choice of $K$ can be regarded as adjusting the range of the receptive field for calculating the derivatives. Consequently, it is natural that a moderate, neither too large nor too small, value for $K$ is most suitable.

\subsubsection{Hyperparameter Selection for the order of polynomial}\label{subsubsec:poly_order}
\begin{table}[t]
\caption{The relative error for Darcy2d-dataset corresponding to the order of the polynomial $p(x)$, ranging from 1 to 4.}
\label{table:AblationStudy_m}
\vskip 0.15in
\begin{center}
\begin{small}
\begin{sc}
\begin{tabular}{ccccc}
\toprule
$m$ & 1 & 2  & 3 & 4\\
\midrule
Errors    & 9.21e-03 & \textbf{8.25e-03} & 9.33e-03 & 3.30e-02 \\
\bottomrule
\end{tabular}
\end{sc}
\end{small}
\end{center}
\vskip -0.1in
\end{table}

We next conduct an ablation study for $m$, the order of $p(x)$ defined by \cref{eq:polynomial}, to choose the most suitable value. We systematically vary the value of $m$ from 1 to 4 for Sobolev Training, recording the corresponding errors in \cref{table:AblationStudy_m}. As described in \cref{table:AblationStudy_m}, we observe that the case of $m=2$ has the best performance. We also observe a sharp decline in performance when $m=4$, as an order that is too high, increases learning complexity, resulting in training difficulties.

\subsubsection{Ablation Studies for the regularity conditions}\label{subsubsec:Regularity_Conditions}
One of the most important conditions in our framework is the regularity condition, specifically $u \in W^{M, 2}(\Omega)$. We investigate the influence of the regularity condition on performance by intentionally omitting it.

\begin{figure}[t]
\vskip 0.2in
\begin{center}
\centerline{\includegraphics[width=1.0\columnwidth]{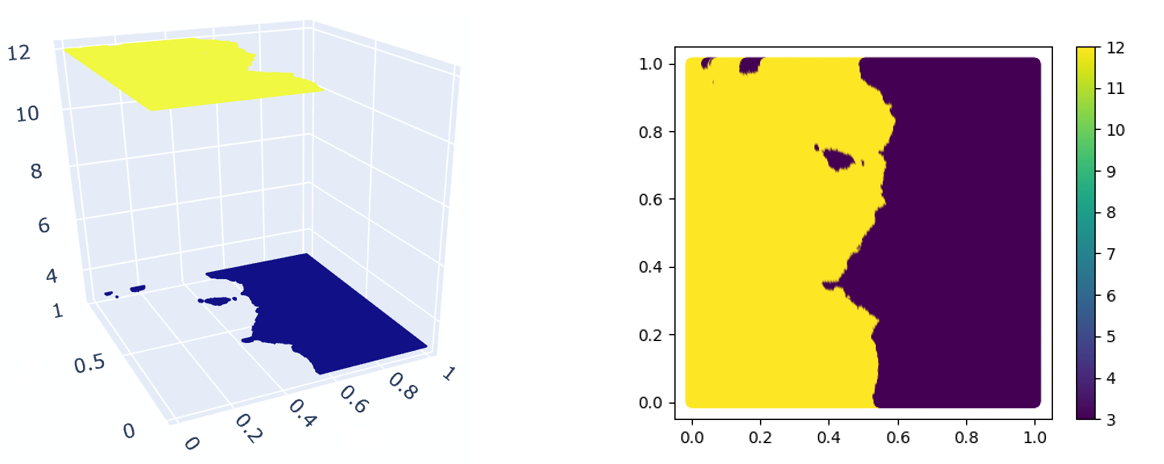}}
\caption{Visualization of coefficient function $a$ of the Darcy2d-dataset. Due to jump discontinuity, $a \in L^{\infty}((0,1)^{2};\mathbb{R}_{+})$ fails to belong to $W^{M, 2}((0,1)^{2})$ for all $M \in \mathbb{N}$.
}
\label{fig:a_discontinuous}
\end{center}
\vskip -0.2in
\end{figure}

\begin{table}[t]
\caption{Relative errors for the inverse problem of Darcy2d. We compare three training types: ours and the ordinary training (with/without) Finite Difference Methods.}
\label{table:Inverse_Darcy2d}
\vskip 0.15in
\begin{center}
\begin{sc}
\begin{tabular}{cc}
\toprule
\begin{tabular}[c]{@{}c@{}}Training \\ Type\end{tabular} & Error\\
\midrule
Ours & 4.78e-02\\
Ordinary    & 2.47e-02\\
Ordinary + FDM    & 6.93e-02\\
\bottomrule
\end{tabular}
\end{sc}
\end{center}
\vskip -0.1in
\end{table}

Here, we address the inverse problem of Darcy2d, aiming to learn a mapping from the solution $u$ to the coefficient $a$. Unlike the forward problem, the target $a \in L^{\infty}((0,1)^{2};\mathbb{R}_{+})$ does not belongs to $W^{M, 2}((0,1)^{2})$ for any $M \in \mathbb{N}$ (see \cref{fig:a_discontinuous}).

We conduct the experiments for the Galerkin Transformer, which performs best on the inverse problem of Darcy2d among baselines. The comparisons involve our approach and the ordinary training, both with and without FDM. As described in \cref{table:Inverse_Darcy2d}, our ordinary training with FDM performs worse than the ordinary one. This is interpreted as a result of the absence of regularity conditions for the target function $a$. Since $a$ is not differentiable due to jump discontinuity, applying Sobolev Training in this case is unsuccessful. This experiment confirms that the regularity condition is necessary to apply Sobolev Training successfully.

\subsubsection{Ablation Studies for PCGrad}\label{subsubsec:Ablation_PCGrad}

\begin{table}[t]
\caption{Comparison of relative errors on Darcy2d dataset when applying and not applying PCGrad. \\ \textbf{\textsc{With PCGrad}} : Sobolev Training with applying PCGrad \\ \textbf{\textsc{Without PCGrad}} : Sobolev Training optimizing \cref{sob_loss} directly rather than applying PCGrad}
\label{table:Ablation_PCGrad}
\vskip 0.15in
\begin{center}
\begin{sc}
\begin{tabular}{cc}
\toprule
\begin{tabular}[c]{@{}c@{}}Training \\ Type\end{tabular} & Error\\
\midrule
with PCGrad & \textbf{7.34e-03}\\
without PCGrad  & 8.01e-03\\
Ordinary    & 8.40e-03\\
\bottomrule
\end{tabular}
\end{sc}
\end{center}
\vskip -0.1in
\end{table}

Our last experiment is to evaluate the impact of applying PCGrad to Sobolev Training on performance. As aforementioned in \cref{subsec:PCGrad}, we conduct an ablation study for PCGrad to confirm the effectiveness of PCGrad experimentally.
We estimate the relative error on the Darcy2d dataset using the Galerkin Transformer, comparing the cases of applying and not applying PCGrad, i.e., directly optimizing \cref{sob_loss}. The results are described in \cref{table:Ablation_PCGrad}. According to \cref{table:Ablation_PCGrad}, the error consistently decreases compared to the ordinary case when Sobolev Training, and particularly, it is confirmed that the performance is further enhanced when combined with PCGrad compared to when not using it.

\section{Conclusion}\label{sec:Conclusion}
In this study, we explore the impact of Sobolev Training on operator learning. Theoretically, we demonstrate that, in the case of a one-layer ReLU network with several assumptions on the kernel function, incorporating a derivative term accelerates the convergence rate. Our numerical experiments further reveal that Sobolev Training significantly influences a variety of datasets and models. Our future work aims to extend these theoretical findings to more general cases and refine the proposed derivative approximation algorithm. This direction promises to enhance the robustness and applicability of our approach in the field of operator learning.

\newpage
\bibliography{reference.bib}
\bibliographystyle{icml2024}

\newpage

\appendix
\onecolumn
\section{Notations}
Let us revisit some of the notations mentioned in Section \ref{subsec:notation} and introduce additional notations to explain details on the algorithm and the theorem proposed in Section \ref{sec:proposed_method}.
\begin{itemize}
\item For $m_1, m_2 \in \mathbb{Z}$, we use the notation $[m_1, \cdots, m_2]$ to denote an array of integers from $m_1$ to $m_2$.
    \item $\alpha = (\alpha^1, \cdots, \alpha^{n}) \in\left(\mathbb{N}\cup\{0\}\right)^n$ denotes a multi-index.
    
    \item We use the notation $|\alpha| = \sum_{i=1}^{n} \alpha^i$ and $\alpha! = (\alpha^1)!(\alpha^2)!\cdots(\alpha^n)!$.
    
    \item $D_x^{\alpha}$ represents a derivative in the $x$ direction of order $\alpha$.
    \item We denote $D_x u = (\partial_1 u, \cdots, \partial_n u)$, where each $\partial_i u$ denotes the partial derivative of $u$ with respect to the $i^{th}$ variable.
    \item We use $w\in \mathbb{R}^n$ to denote the weight of the neural network, and $D_w u(x;w)$ represents the derivative of $u$ with respect to the weight $w$.
    \item For $x\in \mathbb{R}^n$, we denote $x=(x^1, x^2, \cdots, x^n)$ and $\|x\| =\sqrt{(x^1)^2+\cdots +(x^n)^2} $.
    \item We use the notation $f \stackrel{\gamma}{\approx} g$, if there     a universal constant $C=C(\gamma)>1$ such that
    $$
     \frac{1}{C} g\leq f\leq C g.
    $$
    \item For a bounded domain $\Omega\subset \mathbb{R}^n$, we denote 
    $$
    |\Omega| = \int_{\Omega}dx.
    $$
\end{itemize}

\section{Details of Algorithm \ref{alg:gradient_est}}
This section offers a more detailed explanation of the derivative approximation algorithm. For instance, when $n=2$ and $m=2$, we consider the two-dimensional case where the degree of the approximating polynomial is less than or equal to $2$.
We denote $x=(x^{1}, x^{2})$ and mesh points are denoted as $x_j = (x_j^1, x_j^2)$.
The basis vector of $\Pi_{2}^{2}$  is 
\begin{equation*}
b_{j}(x) =  
\begin{pmatrix}
        1\\
        x^{1}-x_{j}^{1}\\
        x^{2}-x_{j}^{2} \\
        (x^{1}-x_{j}^{1})^2\\
        (x^{1}-x_{j}^{1})(  x^{2}-x_{j}^{2}) \\
        (  x^{2}-x_{j}^{2}) ^2
        \end{pmatrix}.
\end{equation*}
For each $x_j\in \{x_{j}\}_{j=1}^{J}$,  we find  K nearest points $\{x_{KNN, j,1}, \cdots, x_{KNN, j, K}\}\subset  \{x_{j}\}_{j=1}^{J}$ and calculate the coefficient vector $c_j$ using \cref{eq:coefficient} with 
\begin{equation}\label{weight_def}
\omega(t) = \left(1-\frac{t}{D}\right)^{4}\left(\frac{4t}{D}+1\right)
\quad \text{and } \quad D = 1.1 * \max_{j, k}\|x_j - x_{KNN, j,k}\|.
\end{equation}
Since the average time complexity of widely used KNN algorithms (KD-tree, Ball-Tree) is  $O(\log N )$ and matrix inversion has a time complexity of $O(K^{3})$ for $K\times K$ matrix, Algorithm \ref{alg:gradient_est} has an average time complexity of $O(K^{4} N\log N )$. Since $K$ is very small compared to $N$ and fixed number, the time complexity of Algorithm \ref{alg:gradient_est} is $O(N\log N)$ considering $K$ as a constant. 
\section{Proof of theorems  in Section \ref{sec:proposed_method} }\label{appendix:proposed_method_proof} 
This section provides detailed proof of theorems in Section \ref{sec:proposed_method}.
\subsection{Proof of Lemma \ref{lem:grad_convergence}}\label{appendix:prove_derivative_lem}
\begin{proof}
Let us assume $u \in C^{M}(\Omega)$ for some $M \in \mathbb{N}$.
We define the matrices $W_j\in \mathbb{R}^{K \times K} $ and $E_j \in  \mathbb{R}^{K \times I} $ as follows: 
\begin{align*}
W_j &= \text{diag}(\omega_{j,1}, \ldots, \omega_{j,K}) \quad \text{where}\quad \omega_{j,k} = \omega(|x_{j} - x_{j,\text{KNN},k}|),\\
E_{kj} &= b_j(x_{j,\text{KNN},k}) \quad \text{for}\quad  k \in [1, \ldots, K], \  j \in [1, \ldots, J].
\end{align*}
Further, we define $A_j:= E_j^T W_j E_j$ and $A_{j, \tilde{j} \xleftarrow{} w}$ to be the matrix formed by replacing the $\tilde{j}^{th}$ column of $A_j$ with the vector $w$.

Now, for each $j \in [1, J]$ and a multi-index $\alpha_{\tilde{j}}$ with $|\alpha_{\tilde{j}}|= m$, we apply Theorem 3.1 of \cite{lipman2006error} to obtain
\begin{equation}\label{eq:pointwise_approx}
c_{j,\alpha_{\tilde{j}} }  - D_x^{ \alpha_{ \tilde{j}   }  } u(x_j) 
= \sum_{|\alpha| = M } \sum_{k=1}^{K} \frac{1}{ 
   \alpha! }
D^{\alpha}_{x}u(x_j+\theta_{j,k}(x_j-x_{KNN, j, k}))
  |x_j-x_{KNN, j,k}|^{\alpha} 
 \frac{\det(A_{j, \tilde{j} \xleftarrow{ } w_kb_j(x_k) })}{\det(A_j)}
\end{equation}
for some $0 \leq \theta_{KNN, j, k} \leq 1$.
We  denote $s=\sum_{\tilde{j}=1}^{I}|\alpha_{\tilde{j}}|$
and use Remark 4 of \cite{liang2013solving} to have
$$
\det(A_{j, \tilde{j} \xleftarrow{ } w_kb_i(x_k) })\approx O(h^{2s - |\alpha_{\tilde{j}}|})  \quad \text{and}
\quad  
\det(A ) \approx  O(h^{2s}).
$$
From the relations above, it is directly checked that 
$$
\frac{\det(A_{j, \tilde{j} \xleftarrow{ } w_kb_j(x_k) })}{\det(A_j)}\leq Ch^{-|\alpha_{\tilde{j}}|}.
$$
First, take a square on the both side of \cref{eq:pointwise_approx}, then take average over $j=1\cdots J$ and take summation for all multi-index satisfying $|\alpha_{\tilde{j}}|=m$, finally  then use assumption \cref{eq:approx_} to conclude
\begin{align*}
\frac{1}{J}
\sum_{|\alpha_{\tilde{j}}|=m}\sum_{j=1}^{J}|c_{j, \alpha_{\tilde{j}} } - D^{\alpha_{\tilde{j}}}_x u(x_j)|^2 
&\leq Ch^{M-m}\|u \|_{W^{M,2}(\Omega)}.
\end{align*}
Then the proof is complete by approximating $u\in W^{M,2}(\Omega)$ by $C^M(\Omega)$ function using Theorem 5.3.2 of \cite{evans2022partial}.
\end{proof}

\subsection{Proof of Theorem  \ref{lem:convergence_analysis} }\label{appendix:convergence_analysis}
We need the following auxiliary lemmas to prove the main theorem of this paper.
One can achieve the result from a direct computation.
\begin{lemma}\label{lem:derivative_rule}
    Suppose that 
    $$
    A \in \mathbb{R}^{1\times d}, 
    \quad 
    B\in \mathbb{R}^{J \times d }\quad \text{and}\quad   w\in \mathbb{R}^{d\times 1 }.
    $$
    If $f(w):= AwBw$, then we establish
    $$
    D_w f(w) = Aw B +BwA.
    $$
\end{lemma}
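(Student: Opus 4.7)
The plan is to verify the identity by a direct component-wise computation, since the statement is essentially the scalar product rule applied to a scalar factor times a vector factor. First I would unpack the dimensions to make sure the claimed identity is even well typed: $A \in \mathbb{R}^{1\times d}$ and $w \in \mathbb{R}^{d\times 1}$ give $Aw \in \mathbb{R}$, a scalar; $B \in \mathbb{R}^{J\times d}$ and $w \in \mathbb{R}^{d\times 1}$ give $Bw \in \mathbb{R}^{J\times 1}$. Therefore $f(w) = (Aw)(Bw) \in \mathbb{R}^{J\times 1}$ and the Jacobian $D_w f(w)$ is a $J \times d$ matrix. On the right-hand side, $AwB$ is a scalar times a $J\times d$ matrix, and $BwA$ is the outer product of a $J$-column $Bw$ with the $d$-row $A$, which is again $J\times d$; so both sides live in the same space.

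Next I would write the $i$-th component of $f$ as
\begin{equation*}
f_i(w) \;=\; \Bigl(\sum_{k=1}^{d} A_k w_k\Bigr)\Bigl(\sum_{\ell=1}^{d} B_{i\ell}\, w_\ell\Bigr),
\end{equation*}
and apply the scalar product rule to the variable $w_j$:
\begin{equation*}
\frac{\partial f_i}{\partial w_j}(w) \;=\; A_j \sum_{\ell=1}^{d} B_{i\ell}\, w_\ell \;+\; \Bigl(\sum_{k=1}^{d} A_k w_k\Bigr) B_{ij} \;=\; (Bw)_i\, A_j \;+\; (Aw)\, B_{ij}.
\end{equation*}
The final step is to recognize each term as an entry of a matrix product: $(Aw)\, B_{ij}$ is the $(i,j)$-entry of the scalar-times-matrix $AwB$, and $(Bw)_i A_j$ is the $(i,j)$-entry of the outer product $(Bw)A$. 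Summing gives $(D_w f)_{ij} = (AwB + BwA)_{ij}$, as claimed.

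There is no genuine obstacle; the only care required is dimensional bookkeeping, specifically keeping straight that $Aw$ is a scalar (so it commutes with everything) while $Bw$ is a column vector (so $BwA$ must be interpreted as an outer product, not a matrix product in the opposite order). An alternative, coordinate-free presentation using the product rule $D[(\phi)(\psi)] = (D\phi)\psi + \phi\, (D\psi)$ for the scalar-valued $\phi(w) = Aw$ and vector-valued $\psi(w) = Bw$ with $D\phi = A$ and $D\psi = B$ yields the same formula immediately, but the component-wise derivation above is the most transparent and requires no additional conventions.
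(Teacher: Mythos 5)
Your proof is correct and is essentially the paper's own argument: the paper offers no written proof beyond the remark that the identity follows "from a direct computation," and your component-wise calculation is exactly that computation, carried out with the correct dimensional bookkeeping (scalar $Aw$ times matrix $B$, plus the outer product $(Bw)A$).
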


The proof of the following lemma can be found in Section 15.10 in \cite{ord1994kendall}.
\begin{lemma}\label{lem:bivariate_normal}
    Suppose that 
    $$
x, y \sim \mathcal{N}(0,1)
    $$
    with a correlation $\rho  \in [0,1]$. Then we denonstrate
    $$
    \mathbb{P}\left[x>0 , y>0 \right] =\frac{1}{4} +\frac{1}{2\pi} \arcsin{\rho}
    $$
\end{lemma}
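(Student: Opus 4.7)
The plan is to exploit the product structure of $u_k(x;w)$, evaluate all Gaussian-ReLU expectations in closed form via Lemma \ref{lem:bivariate_normal}, and then compare the two gradient flows by pairing each with $w-w^{*}$.

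\textbf{Reduction and setup.} Under \cref{eq:u_represent,eq:kappa_multiply,eq:kappa_assume,v_assume}, the prediction factors as $u_k(x;w)=\mu_k\sigma(w^{T}x)\,S_k(w;y)$ where $S_k(w;y)$ is $x$-independent, so (modulo the $S_k$ prefactor) both population losses reduce to single-neuron quantities
\[
F(w,w^{*})=\mathbb{E}_x[(\sigma(w^{T}x)-\sigma((w^{*})^{T}x))^{2}],\qquad G(w,w^{*})=\mathbb{E}_x[\|\sigma'(w^{T}x)\,w-\sigma'((w^{*})^{T}x)\,w^{*}\|^{2}].
\]
I parameterize by $r=\|w\|$, $r^{*}=\|w^{*}\|$, and $\theta=\angle(w,w^{*})$, exploiting rotational invariance of the standard Gaussian to work in the $2$-plane spanned by $w,w^{*}$.

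\textbf{First bullet ($L^{2}$ convergence).} Using Lemma \ref{lem:derivative_rule} together with the Tian-style identity $\mathbb{E}_x[\sigma(w^{T}x)\sigma((w^{*})^{T}x)]=\tfrac{rr^{*}}{2\pi}(\sin\theta+(\pi-\theta)\cos\theta)$ (derivable from Lemma \ref{lem:bivariate_normal} by decomposing into orthant integrals), I write $\mathbb{E}_{X}[\partial_{w}\mathcal{L}_{L^{2}}]=\alpha(\theta,r,r^{*})\,w+\beta(\theta,r,r^{*})\,w^{*}$ with explicit $\alpha,\beta$. The gradient-flow identity $\tfrac12\tfrac{d}{dt}\|w-w^{*}\|^{2}=-\langle w-w^{*},\mathbb{E}_{X}[\partial_{w}\mathcal{L}_{L^{2}}]\rangle$ reduces everything to showing this inner product is strictly positive on $\{\|w-w^{*}\|<\|w^{*}\|\}$. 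That region is equivalent to $2\langle w,w^{*}\rangle>\|w\|^{2}$, i.e.\ $\cos\theta>r/(2r^{*})>0$, so $\theta\in[0,\pi/2)$; a routine trigonometric check on $\alpha,\beta$ establishes positivity there. Forward invariance of the region plus a Lyapunov argument with $V(w)=\|w-w^{*}\|^{2}$ then yields $w_{L^{2}}\to w^{*}$.

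\textbf{Second bullet (Sobolev vs.\ $L^{2}$).} Since $(\sigma')^{2}=\sigma'$ is the Heaviside function, a direct computation gives
\[
G(w,w^{*})=\tfrac12 r^{2}+\tfrac12(r^{*})^{2}-2\langle w,w^{*}\rangle\,\mathbb{P}[w^{T}x>0,(w^{*})^{T}x>0],
\]
and the orthant probability equals $\tfrac12-\tfrac{\theta}{2\pi}$ by Lemma \ref{lem:bivariate_normal}. Using $\mathcal{L}_{Sob}=\mathcal{L}_{L^{2}}+\mathcal{L}_{Der}$, the difference of decay rates is
\[
\tfrac12\tfrac{d}{dt}\|w_{Sob}-w^{*}\|^{2}-\tfrac12\tfrac{d}{dt}\|w_{L^{2}}-w^{*}\|^{2}=-\langle w-w^{*},\,\mathbb{E}_{X}[\partial_{w}\mathcal{L}_{Der}]\rangle,
\]
so the claim reduces to non-negativity of this inner product, with strict positivity when $\theta\in(0,\pi)$. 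Differentiating the explicit expression for $G$ (which depends on $w$ only through $r^{2}$, $\langle w,w^{*}\rangle$ and $\theta$) and pairing with $w-w^{*}$, the answer collapses to a sum of a squared term in $(w-w^{*})$ and a term proportional to $\theta/\pi$, manifestly non-negative and strictly positive whenever $w,w^{*}$ are not parallel.

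\textbf{Main obstacle.} The chief technical difficulty is bookkeeping: the prefactor $S_k(w;y)$ also depends on $w$, so the product rule adds cross-terms to both $\partial_{w}\mathcal{L}_{L^{2}}$ and $\partial_{w}\mathcal{L}_{Der}$ that must be shown not to flip the sign of the inner product against $w-w^{*}$. The cleanest route is to treat $\{y_l\}$ as i.i.d.\ $\mathcal{N}(0,I)$ samples, consistent with the paper's Gaussian query-point assumption, so that $S_k(w;y)$ and its $w$-derivative reduce after expectation to scalars whose signs are controlled by the same Lemma \ref{lem:bivariate_normal} identities that drive the leading-order analysis. The non-smoothness of $\sigma'$ at $0$ is a measure-zero event under the Gaussian law and does not obstruct the formal chain-rule manipulations for the population gradient.
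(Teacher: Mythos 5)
Your proposal does not prove the statement it was assigned. The statement is Lemma~\ref{lem:bivariate_normal}, the classical orthant-probability identity $\mathbb{P}[x>0,y>0]=\tfrac14+\tfrac{1}{2\pi}\arcsin\rho$ for a standard bivariate normal pair with correlation $\rho$. What you have written is instead a proof sketch of the main convergence result, Theorem~\ref{lem:convergence_analysis}: you discuss the factorization of $u_k(x;w)$, the two gradient flows, and the comparison of decay rates, and at several points you explicitly \emph{invoke} Lemma~\ref{lem:bivariate_normal} as a known tool (``evaluate all Gaussian-ReLU expectations in closed form via Lemma~\ref{lem:bivariate_normal}'', ``the orthant probability equals $\tfrac12-\tfrac{\theta}{2\pi}$ by Lemma~\ref{lem:bivariate_normal}''). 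Using the lemma as an ingredient in a different argument is not a proof of the lemma; as written, the target statement is nowhere established.

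For reference, the paper itself does not prove this lemma either: it is cited from Section 15.10 of Kendall and Stuart (Sheppard's formula). A self-contained proof is short: write $y=\rho x+\sqrt{1-\rho^{2}}\,z$ with $z\sim\mathcal{N}(0,1)$ independent of $x$, so that $(x,y)>0$ componentwise iff the standard Gaussian vector $(x,z)$ lies in the intersection of the half-planes $\{x>0\}$ and $\{\rho x+\sqrt{1-\rho^{2}}\,z>0\}$; by rotational invariance this wedge has probability equal to its opening angle divided by $2\pi$, namely $\bigl(\tfrac{\pi}{2}+\arcsin\rho\bigr)/(2\pi)=\tfrac14+\tfrac{1}{2\pi}\arcsin\rho$. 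If you intend to submit a proof of Lemma~\ref{lem:bivariate_normal}, this (or the equivalent computation via $\frac{\partial}{\partial\rho}\mathbb{P}[x>0,y>0]=\frac{1}{2\pi\sqrt{1-\rho^{2}}}$) is the argument to write down; the material you produced belongs to the proof of Theorem~\ref{lem:convergence_analysis} instead.
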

Let us denote
$\theta(w_1, w_2) \in [0, \pi]$ by the angle between $w_1$ and $w_2\in \mathbb{R}^n$ .
 For the notational convenience, we denote 
\begin{align*}
 C (e,w) &=  \frac{1}{2\pi }
 [(\pi -  \theta(e, w))w+\|w\| \sin \theta(e,w) e] \in \mathbb{R}^{d\times  1},\\
\alpha(w_1, w_2) &= 
w_1^TC
\left(\frac{w_1}{\|w_1\|} , w_2\right) 
= \frac{\|w_1\| \|w_2\|}{2\pi}
[(\pi -  \theta(w_1, w_2)) \cos \theta(w_1,w_2)+  \sin \theta(w_1,w_2) ] 
\in \mathbb{R}.
\end{align*}
Furthermore, we denote
\begin{align*}
\phi_0(w_1,w_2)  & = \frac{1}{2\pi} [(\pi -  \theta(w_1, w_2)) \cos \theta(w_1,w_2)+  \sin \theta(w_1,w_2) ] \\
\phi_1(e,w)  & = \frac{1}{2\pi }(\pi -  \theta(e, w))  \\
\phi_2(e,w)  & = \frac{1}{2\pi }\sin \theta(e,w)    .
\end{align*}
Note that one can directly check that
\begin{align*}
    C(e,w) = \phi_1(e,w) w + \phi_2(e,w) \|w\|e \quad \text{and}\quad \alpha(w_1,w_2) = \|w_1\|\|w_2\| \phi_0(w_1,w_2).
\end{align*}
We need the following results from 
Theorem 1 of \cite{tian2017analytical}.
\begin{lemma}\label{lem:XTX_formula}
    Let us denote
    \begin{equation}\label{eq:FXew}
    F(X, e,w) := X^T D(X, e)D(X, w)Xw.
    \end{equation}
    Then we find
    $$
    \mathbb{E}_X\left[F(X, e, w)\right] = \frac{J}{2\pi } 
    [(\pi -  \theta(e, w))w+\|w\| \sin \theta(e,w) e] = J C(e,w).
    $$
\end{lemma}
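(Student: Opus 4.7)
The plan is to unpack the matrix $F$ into a sum of i.i.d.\ terms, exploit Gaussian rotational invariance to reduce to a computation in the two-dimensional plane spanned by $e$ and $w$, and then evaluate three explicit trigonometric integrals in polar coordinates.

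First, I would interpret $D(X,v)$ as the Jacobian of $\mathrm{ReLU}$ applied coordinatewise to $Xv$, so that it is the $J\times J$ diagonal matrix with entries $\mathbf{1}[x_j^T v \geq 0]$. Writing the matrix products row by row gives
\begin{equation*}
F(X,e,w) \;=\; \sum_{j=1}^{J} x_j\,(x_j^T w)\,\mathbf{1}[x_j^T e \geq 0]\,\mathbf{1}[x_j^T w \geq 0].
\end{equation*}
Since the rows $x_j$ are i.i.d.\ $\mathcal{N}(0,I_n)$, linearity of expectation yields $\mathbb{E}_X[F(X,e,w)] = J\,Mw$ with $M := \mathbb{E}_{x}\!\left[xx^T\,\mathbf{1}[x^T e \geq 0,\,x^T w \geq 0]\right]$, and the task reduces to showing $Mw = C(e,w)$.

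Next, I would use rotational invariance of the spherical Gaussian to pick an orthonormal basis $\{\hat{e}_1,\ldots,\hat{e}_n\}$ with $\hat{e}_1 := e$ and $\hat{e}_2 \in \mathrm{span}\{e,w\}$ orthogonal to $e$, so that $w = \|w\|\cos\theta\,\hat{e}_1 + \|w\|\sin\theta\,\hat{e}_2$ with $\theta := \theta(e,w)\in[0,\pi]$. The coordinates $u_i := x^T\hat{e}_i$ are i.i.d.\ $\mathcal{N}(0,1)$, and only $u_1,u_2$ appear in the indicator. A symmetry argument (any $u_i$, $i\geq 3$, is independent of $(u_1,u_2)$ and has zero mean) kills every component of $Mw$ along $\hat{e}_i$ for $i\geq 3$, reducing the problem to computing the $2\times 2$ block entries $M_{11}$, $M_{12}$, $M_{22}$.

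In polar coordinates $(u_1,u_2)=(r\cos\phi, r\sin\phi)$, the constraints $x^T e \geq 0$ and $x^T w \geq 0$ become $\cos\phi \geq 0$ and $\cos(\phi - \theta) \geq 0$; for $\theta\in[0,\pi]$ their intersection is uniformly the arc $\phi\in[\theta-\pi/2,\,\pi/2]$ of length $\pi-\theta$, which as a consistency check matches $\mathbb{P}[u_1\geq 0,\ \cos\theta\,u_1+\sin\theta\,u_2\geq 0]=(\pi-\theta)/(2\pi)$ obtained from Lemma \ref{lem:bivariate_normal} with correlation $\cos\theta$. Using $\int_0^\infty r^3 e^{-r^2/2}\,dr = 2$ together with elementary trigonometric integrals on this arc, I would obtain
\begin{equation*}
M_{11} = \frac{\pi-\theta}{2\pi} + \frac{\sin 2\theta}{4\pi},\qquad M_{22} = \frac{\pi-\theta}{2\pi} - \frac{\sin 2\theta}{4\pi},\qquad M_{12} = \frac{\sin^2\theta}{2\pi}.
\end{equation*}

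Finally, substituting $w_1 = \|w\|\cos\theta$, $w_2 = \|w\|\sin\theta$ and using $\sin 2\theta = 2\sin\theta\cos\theta$ together with $\cos^2\theta+\sin^2\theta=1$, the cubic cross terms in $(Mw)_1$ collapse to $\|w\|\sin\theta/(2\pi)$ and $(Mw)_2$ reduces to $\|w\|(\pi-\theta)\sin\theta/(2\pi)$. These are precisely the $\hat{e}_1$ and $\hat{e}_2$ components of $\tfrac{1}{2\pi}\bigl[(\pi-\theta)w + \|w\|\sin\theta\,e\bigr] = C(e,w)$, giving $\mathbb{E}_X[F]=JMw=JC(e,w)$. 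The main obstacle is essentially bookkeeping: handling the angular limits uniformly for all $\theta\in[0,\pi]$ and tracking small sign manipulations (notably $\sin(2\theta-\pi)=-\sin 2\theta$) when evaluating the primitive at $\phi = \theta - \pi/2$, a difficulty that the uniform arc description $[\theta-\pi/2,\pi/2]$ makes transparent.
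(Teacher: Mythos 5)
Your proposal is correct: the reduction of $F$ to a sum of i.i.d.\ rank-one terms, the rotational-invariance reduction to the plane $\mathrm{span}\{e,w\}$, and the three polar-coordinate integrals (I checked $M_{11}$, $M_{22}$, $M_{12}$ and the collapse of $Mw$ to $C(e,w)$) all go through. The paper itself gives no proof of this lemma --- it imports the identity directly from Theorem 1 of \cite{tian2017analytical} --- and your argument is essentially the standard derivation from that reference, so you have simply supplied the computation the paper delegates to its citation.
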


\begin{lemma}\label{lem:g_less_0}
    Let us define
    $$
    g(\theta) :=(\cos \theta -1)((\pi-\theta)(\cos\theta -1)+2\sin \theta).
    $$
    Then $g(\theta)\leq 0$ for all $\theta \in [0, \pi]$.
\end{lemma}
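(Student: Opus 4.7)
The plan is to factor $g$ into two pieces whose signs are individually controlled. Write $g(\theta)=(\cos\theta-1)\,h(\theta)$ where
\[
h(\theta):=(\pi-\theta)(\cos\theta-1)+2\sin\theta.
\]
Since $\cos\theta-1\le 0$ on $[0,\pi]$, it suffices to show $h(\theta)\ge 0$ on the same interval. Then $g\le 0$ follows immediately.

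To prove $h\ge 0$, I would apply the half-angle identities $\cos\theta-1=-2\sin^2(\theta/2)$ and $\sin\theta=2\sin(\theta/2)\cos(\theta/2)$. Substituting and factoring out $2\sin(\theta/2)\ge 0$ yields
\[
h(\theta)=2\sin(\theta/2)\,f(\theta),\qquad f(\theta):=2\cos(\theta/2)-(\pi-\theta)\sin(\theta/2).
\]
So the problem reduces to showing $f(\theta)\ge 0$ on $[0,\pi]$.

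For this, the key observation is that differentiating $f$ produces a striking cancellation. A direct computation gives
\[
f'(\theta)=-\sin(\theta/2)-\bigl[-\sin(\theta/2)+\tfrac{\pi-\theta}{2}\cos(\theta/2)\bigr]=-\tfrac{\pi-\theta}{2}\cos(\theta/2),
\]
which is nonpositive on $[0,\pi]$ because both $\pi-\theta$ and $\cos(\theta/2)$ are nonnegative there. Thus $f$ is monotonically decreasing. Since $f(\pi)=2\cos(\pi/2)-0=0$, monotonicity forces $f(\theta)\ge 0$ throughout $[0,\pi]$, which chains back to $h\ge 0$ and finally to $g\le 0$.

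There is no real obstacle here; the only mildly non-obvious step is spotting the half-angle substitution that both factors out $\sin(\theta/2)$ and produces the clean derivative formula. Once $f$ is written in that form, everything else is a one-line monotonicity argument. If desired, one can also record when equality holds: $g(\theta)=0$ exactly when $\cos\theta=1$ (i.e.\ $\theta=0$) or when $h(\theta)=0$, which from the factorization occurs only at $\theta=0$ and $\theta=\pi$.
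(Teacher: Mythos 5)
Your proof is correct, and it takes a genuinely different route from the paper's. You factor $g(\theta)=(\cos\theta-1)h(\theta)$, observe the first factor is nonpositive, and reduce to showing $h\ge 0$; the half-angle substitution $h(\theta)=2\sin(\theta/2)\bigl(2\cos(\theta/2)-(\pi-\theta)\sin(\theta/2)\bigr)$ then leaves only the cofactor $f(\theta)=2\cos(\theta/2)-(\pi-\theta)\sin(\theta/2)$, whose derivative collapses to $-\tfrac{\pi-\theta}{2}\cos(\theta/2)\le 0$, so $f$ decreases to $f(\pi)=0$ and is hence nonnegative. All of these computations check out. The paper instead works with $g$ directly: it notes $g(0)=g(\pi)=0$, computes $g'$, characterizes interior critical points via the relation $3\sin\theta_0=2(\pi-\theta_0)(1-\cos\theta_0)$, substitutes back to get $g(\theta_0)<0$ at every critical point, and then classifies critical points using $g''$. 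Your argument is shorter, avoids the second-derivative case analysis entirely (which in the paper is the most delicate part and contains at least one slip, e.g.\ the invocation of ``$\sin\theta>\theta$ for all $\theta>0$''), and as a bonus cleanly identifies the equality set $\{0,\pi\}$, giving the strict inequality $g(\theta)<0$ on $(0,\pi)$ that the paper needs later for the strict-decrease statement in Theorem 3.2. The only thing the paper's approach ``buys'' is the explicit location and value of the interior minimum of $g$, which is not actually used elsewhere.
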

\begin{proof}
    Cleary, we observe that $g(0)=g(\pi) = 0$. Also, from a direct computation, we find
$$
g'(\theta) = 
\sin \theta (-3 \sin\theta +2 (\pi  - \theta)(1-\cos \theta )).
$$
If $g'(\theta_0)= 0 $ and $0< \theta_0<\pi$, then $\sin \theta_0 \not =0$ and we have
\begin{equation}\label{eq:theta_0_eq}
3\sin\theta_0 =2(\pi-\theta_0)(1-\cos\theta_0).
\end{equation}
 By \cref{eq:theta_0_eq} and some mathematical manipulation, 
 we find  
    $$
    g(\theta_0)=\frac{\pi-\theta_0}{3} ( \cos \theta_0 -1) ( 1 - \cos \theta_0)<0.
    $$
    The strict inequality holds because $\cos \theta_0\not = 0$.
Taking twice derivative of $g(\theta)$, we find
$$
g''(\theta) 
=
-2((\theta
- \pi)  (2\cos^2\theta  -\cos\theta - 1) + \sin \theta(1+2\cos\theta)).
$$
By \cref{eq:theta_0_eq} again,  we find
\begin{align*}
g''(\theta_0) 
&= -2\left
((\pi-\theta_0)(2\cos^2\theta_0 -\cos\theta_0 -1) +\frac{2}{3}(\pi-\theta_0)(1-\cos \theta_0)(2\cos\theta_0+1)
\right)\\
&= -\frac{2}{3}
(\pi-\theta_0)(2\cos^2\theta_0 -\cos\theta_0 -1)\\
&=
-\frac{2}{3}
(\pi-\theta_0)(2\cos\theta_0 +1)(\cos\theta_0 -1).
\end{align*}
Note that if $0<\theta_0 <\frac{5\pi}{6}$, then $-\frac{1}{2} \leq \cos\theta_0\leq 1$, which implies that $g''(\theta_0)> 0$, as $(2t+1)(t-1)< 0$ for $-\frac{1}{2}<t<1$.
On the other hand, if $\frac{5\pi}{6}\leq \theta_0  < \pi$, then we have
\begin{align*}
    g'(\theta_0) = \sin \theta_0  (\pi-\theta_0)(1-\cos \theta)\left( 2-\frac{3\sin \theta_0 }{(\pi-\theta_0)  (1-\cos \theta_0)}\right).
\end{align*}
Reminding $\sin \theta > \theta$ for all $\theta>0$, it is evident that
$$
\frac{\sin \theta_0 }{\pi-\theta_0} =\frac{\sin (\pi-\theta_0) }{\pi-\theta_0}  <1
\quad\text{and}
\quad
\frac{3}{2}\leq 1-\cos\theta_0\leq 2, 
$$
 to get $g'(\theta_0)>0$. 
 These calculations imply that $\theta_0$ is not the extreme point. Therefore, all the extreme points are local minimums, implying that there exists only one local minimum of $g(\theta)$ at $\theta_0\in (0,\pi)$ and $g(\theta_0)<0$. We can finally conclude that $g(\theta)\leq 0$ for all $0\leq \theta\leq \pi$.
\end{proof}
The following lemma can be directly from elementary calculation.
\begin{lemma}\label{lem:sign_of_3_oder_poly}
Let us define a real-valued 3rd-order polynomial 
$$
f(t): = A t^3 - Bt^2 -Ct+D.
$$
If $A>0$ and $B^2 +3AC >0$, then $f(t)$ has a  local minimum at 
$$
t_0 = \frac{B+\sqrt{B^2  + 3AC}}{3A},
$$
and the value of the  local minimum value is
$$
f(t_0) = \frac{1}{27A^2}\left(
27A^2D -2B^3 - 9ABC - 2(B^2+3AC)\sqrt{B^2+3AC}
\right).
$$
 \end{lemma}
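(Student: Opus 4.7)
This is an elementary calculus statement about a cubic; the plan is to locate critical points via the quadratic formula applied to $f'$, verify the second-order condition, and then substitute into $f$ and simplify. The only step requiring care is the final algebraic simplification, which is purely mechanical but easy to make sign errors on; I will organize it by multiplying through by $27A^2$ so every term is a polynomial in $B, C, D$ and $s:=\sqrt{B^2+3AC}$.

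\emph{Step 1: critical points.} Compute $f'(t)=3At^2-2Bt-C$. Its discriminant is $(2B)^2+4(3A)C=4(B^2+3AC)$, which is strictly positive by hypothesis. Hence $f'$ has two real roots
\begin{equation*}
t_\pm=\frac{2B\pm\sqrt{4(B^2+3AC)}}{6A}=\frac{B\pm\sqrt{B^2+3AC}}{3A},
\end{equation*}
and in particular $t_0=t_+$ is well-defined.

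\emph{Step 2: second derivative test.} Differentiating again, $f''(t)=6At-2B$. Evaluating at $t_0$ gives $f''(t_0)=6A\cdot\frac{B+s}{3A}-2B=2s>0$, where $s:=\sqrt{B^2+3AC}>0$. Since $A>0$ and $f''(t_0)>0$, the critical point $t_0$ is indeed a strict local minimum. (The other root $t_-$ satisfies $f''(t_-)=-2s<0$ and is therefore a local maximum.)

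\emph{Step 3: value at the minimum.} Write $3At_0=B+s$, so $t_0^2=\frac{(B+s)^2}{9A^2}$ and $t_0^3=\frac{(B+s)^3}{27A^3}$. Using $s^2=B^2+3AC$, expand
\begin{equation*}
(B+s)^3 = 4B^3+9ABC+(4B^2+3AC)\,s,
\qquad
(B+s)^2 = 2B^2+3AC+2Bs.
\end{equation*}
Multiplying $f(t_0)=At_0^3-Bt_0^2-Ct_0+D$ by $27A^2$ yields
\begin{align*}
27A^2 f(t_0) &= \bigl[4B^3+9ABC+(4B^2+3AC)s\bigr] - 3\bigl[2B^3+3ABC+2B^2 s\bigr] \\
&\quad - 9A\bigl[BC+Cs\bigr] + 27A^2 D.
\end{align*}
Collecting the constant coefficients gives $4B^3-6B^3=-2B^3$ and $9ABC-9ABC-9ABC=-9ABC$; collecting the coefficients of $s$ gives $(4B^2+3AC)-6B^2-9AC=-2B^2-6AC=-2(B^2+3AC)$. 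Therefore
\begin{equation*}
27A^2 f(t_0) = 27A^2 D - 2B^3 - 9ABC - 2(B^2+3AC)\,\sqrt{B^2+3AC},
\end{equation*}
which after dividing by $27A^2$ is precisely the claimed formula.

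\textbf{Main obstacle.} There is no deep difficulty here; the only risk is a bookkeeping mistake in expanding $(B+s)^3$ and combining the three pieces $At_0^3$, $-Bt_0^2$, $-Ct_0$ on a common denominator. Writing everything in terms of $s$ and multiplying through by $27A^2$ at the outset, as above, keeps all intermediate expressions integer-coefficient polynomials in $A,B,C,D,s$ and makes the cancellations transparent.
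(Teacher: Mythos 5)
Your proof is correct, and it is exactly the elementary calculation the paper alludes to (the paper states the lemma "can be directly \emph{[obtained]} from elementary calculation" and omits the details entirely). The critical-point location, the second-derivative test giving $f''(t_0)=2\sqrt{B^2+3AC}>0$, and the expansion of $27A^2f(t_0)$ all check out against the stated formula.
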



\textbf{Proof of Theorem \ref{lem:convergence_analysis}}\label{appendix:lem_}
\begin{proof}
  In this work, we consider a real-valued function; that is, we  assume that
$$
x\in \mathbb{R}^d \quad \text{and}\quad  u(x), v(x)\in \mathbb{R}.
$$
    Consider the sets of input and output function pairs, denoted by $\{u_k(x), v_k(x)\}_{k=1}^{N}$. 
    Additionally, let $\{x_j\}_{j=1}^{J}$ and $\{y_l\}_{l=1}^{\tilde{J}}$ represent query points that are sampled from a spherical  Gaussian distribution $\mathcal{N}(0, I)$.
    We also assume that $x_j$ and $y_l$ are independently sampled for all $j$ and $l$.
    Let $
X = [x_1,x_2, \cdots ,x_J]\in \mathbb{R}^{J\times n}$ and $Y=[y_1,y_2, \cdots ,y_{\tilde{J}}]\in \mathbb{R}^{\tilde{J}\times n}$  be a matrix representation of each query points. 
We represent each $u_k$ using a Monte Carlo approximation, as in 
   \begin{equation}\label{eq:u_k}
   u_k(x_j) 
   =
   \frac{1}{|\Omega|}\int_{\Omega} \kappa(x_j,y) v_k(y)d\nu (y)
   \approx 
 \frac{1}{\tilde{J}} \sum_{l=1}^{ \tilde{J}} 
\mathbb{E}_Y\left[\kappa(x_j, y_l)v_k(y_l) \right].
   \end{equation}
   As explained in Section \ref{sec:proposed_method}, we assume $v_k(y_l)$ and $\kappa(x,y)$ can be expressed as  is given as 
   \begin{equation}\label{eq:v_kappa_assume}
   v_k(x) = \mu_k \sigma((w^*)^Tx )
   \quad \text{and}
   \quad
   \kappa(x,y)= \sigma(w^Tx)\sigma(w^Ty).
   \end{equation}
   Using the matrix representation \cref{eq:u_k}, assumption \cref{eq:v_kappa_assume}  and  Lemma \ref{lem:XTX_formula}, we have 
\begin{equation}\label{eq:u_kXW}
    \begin{aligned}
        u_{k}(X; w) 
        &=  \frac{\mu_{k}  (w^*)^T }{\tilde{J} } \mathbb{E}_{Y}\left[
         Y^TD(Y, w^*)D(Y,w)Yw
         \right]
         D(X,w)Xw\\
        & = \frac{\mu_{k}  (w^*)^T }{\tilde{J} } \mathbb{E}_{Y}\left[
         F\left(Y, \frac{w^*}{\|w^*\|} , w\right)
         \right]
         D(X,w)Xw\\
         &=\frac{\mu_k(w^*)^T}{2\pi}
         \left[
         (\pi-\theta(w, w^*) ) w +\|w\| \sin \theta(w, w^*)  \frac{w^*}{\|w^*\|}
         \right] D(X,w)Xw\\
        &=\mu_k(w^*)^T C\left(\frac{w^*}{\|w^*\|}, w\right) D(X,w)Xw\\
         &=\mu_k\alpha(w, w^*) D(X,w)Xw.
    \end{aligned}
\end{equation}
Applying Lemma \ref{lem:derivative_rule} to $u_k(X;w)$, we find 
\begin{equation}\label{eq:Du_kXW}
    \begin{aligned}
       D_w u_k(X;w) 
       & =\mu_k\alpha(w, w^*) D(X,w)X +  \frac{\mu_k }{\tilde{J}} D(X,w)X w \mathbb{E}_{Y}\left[ (w^*)^T  Y^T D(Y,w^*)D(Y,w)Y\right] .
   \end{aligned}
\end{equation}
For $w_1, w_2 \in \mathbb{R}^d$, one use \cref{eq:u_kXW,eq:Du_kXW,eq:FXew} and Lemma \ref{lem:XTX_formula}  to have  
\begin{align*}
&\left[
 D_w u_k(X;w_2 )\right]^ T u_k(X;w_1 )\\
 & =| \mu_{k}|^2 \alpha(w^*, w_1)\alpha(w^*, w_2)  X^T D(X,w_2) D(X,w_1)Xw_1 \\
&\quad +   \frac{\mu_k^2 \alpha(w^*, w_1) }{\tilde{J}}
\mathbb{E}_{Y}\left[ Y^T D(Y,w_2)D(Y,w^*)Y w^* \right]w_2^TX^TD(W,w_2)D(X,w_1)Xw_1\\
&=| \mu_{k}|^2 \alpha(w^*, w_1)\alpha(w^*, w_2)   F\left(X, \frac{w_2}{\|w_2\|},w_1\right) + \frac{| \mu_{k}|^2  \alpha(w^*, w_1)}{\tilde{J}} E_Y[ F\left(Y,\frac{w_2}{\|w_2\|}, w^* \right)] 
w_2^TF\left(X,\frac{w_2}{\|w_2\|}, w_1\right)\\
& = 
| \mu_{k}|^2 \alpha(w^*, w_1)\alpha(w^*, w_2)   F\left(X, \frac{w_2}{\|w_2\|},w_1\right) 
+
| \mu_{k}|^2  \alpha(w^*, w_1)
C\left(\frac{w_2}{\|w_2\|}, w^* \right) 
w_2^TF\left(X,\frac{w_2}{\|w_2\|}, w_1\right)\\
& = 
| \mu_{k}|^2
\left( 
\alpha(w_2, w^*)
I_{d}
+
C\left(\frac{w_2}{\|w_2\|}, w^* \right) 
w_2^T \right)\alpha(w^*, w_1) F\left(X,\frac{w_2}{\|w_2\|}, w_1\right)
.
\end{align*}
This leads to 
\begin{align*}
&\left[
 D_w u_k(X;w)\right]^ T 
 \left( u_k(X;w)
 - 
 u_k(X;w^*) \right)\\
&=| \mu_{k}|^2
\left( 
\alpha(w, w^*) 
I_{d}
+
C\left(\frac{w}{\|w\|}, w^* \right) 
w^T \right)
\left(
\alpha(w, w^*) F\left(X,\frac{w}{\|w\|}, w\right) 
-
\alpha(w^*, w^*) F\left(X,\frac{w}{\|w\|}, w^*\right) 
\right).
\end{align*}
Therefore, it follows that
\begin{align*}
    \mathbb{E}_{X} \left[\frac{\partial \mathcal{L}_{L^2} }{\partial w} \right]
    = \frac{1}{NJ}\sum_{k=1}^{N} \left[
 D_w u_k(X;w)\right]^ T 
 \left( u_k(X;w)
 - 
 u_k(X;w^*) \right),
 \end{align*}
 where $\mathcal{L}_{L^2}$ is the loss function defined as in \cref{L2_loss}.
 If $w$ is updated by $\cref{eq:gradient_flow}_1$, 
 it is directly checked that
 \begin{align*}
    & \frac{1}{2}\frac{d}{dt}\|w - w^*\|^2\\ 
     &=  (w - w^*)^ T  \frac{dw}{dt}\\
     & =  - (w - w^*)^ T  \mathbb{E}_{X} \left[\frac{\partial \mathcal{L}_{L^2} }{\partial w} \right]\\
     &= - \frac{1}{NJ}\sum_{k=1}^{N}| \mu_{k}|^2 (w - w^*)^T \left( 
\alpha(w, w^*) 
I_{d}
+
C\left(\frac{w}{\|w\|}, w^* \right) 
w^T \right)\\
     &\quad \times   
\left(
\alpha(w, w^*) \mathbb{E}_X\left[F\left(X,\frac{w}{\|w\|}, w\right) \right]
-
\alpha(w^*, w^*) 
\mathbb{E}_X\left[F\left(X,\frac{w}{\|w\|}, w^* \right) \right]
\right)\\
&= - \frac{1}{N}\sum_{k=1}^{N}| \mu_{k}|^2 (w - w^*)^T \left( 
\alpha(w, w^*)  
I_{d}
+
C\left(\frac{w}{\|w\|}, w^* \right) 
w^T \right)\\
     &\quad \times   
\left(
\alpha(w, w^*) C\left(\frac{w}{\|w\|}, w \right)
-
\alpha(w^*, w^*) 
C\left(\frac{w}{\|w\|}, w ^*\right)
\right).
 \end{align*}
For each $k=1\cdots N$, it follows that 
\begin{align*}
&(w-w^*)^T \left( 
\alpha(w, w^*) I_{d}
+
C\left(\frac{w}{\|w\|}, w^*  \right) 
w^T \right)\\
&= (w-w^*)^T 
\left(
\|w\|\|w^*\|\phi_0(w,w^*) + 
\phi_1(w,w^*)w^*w^T + \phi_2(w,w^*)\frac{\|w^*\|}{\|w\|}ww^T
\right)\\
&= \|w\|\|w^*\|\phi_0(w,w^*) (w-w^*)^T 
+ \phi_1(w,w^*)w^T w^*w^T +\phi_2(w,w^*)\|w^*\|\|w\|w^T  \\
&\quad 
-\phi_1(w,w^*)(w^*)^T w^*w^T 
-\phi_2(w,w^*)\frac{\|w^*\|}{\|w\|}(w^*)^Tw  w^T\\
&=\|w\|\|w^*\|\phi_0(w,w^*) (w-w^*)^T 
+ \phi_0(w,w^*)\|w^*\|\|w\|w^T  \\
&\quad 
-\phi_1(w,w^*)\|w^*\|^2 w^T 
-\phi_2(w,w^*)\|w^*\|^2 \cos \theta(w,w^*)  w^T \\   
&=2\|w\|\|w^*\|\phi_0(w,w^*) w^T-\|w\|\|w^*\|\phi_0(w,w^*) (w^*)^T 
 \\
&\quad 
-\|w^*\|^2\phi_1(w,w^*) w^T 
-\|w^*\|^2\phi_2(w,w^*) \cos \theta(w,w^*)  w^T \\   
&=:I_1+I_2+I_3+I_4.
\end{align*}
Next, we calculate
\begin{align*}
    &   \alpha(w, w^*) C\left(\frac{w}{\|w\|}, w \right)
-
\alpha(w^*, w^*) 
C\left(\frac{w}{\|w\|}, w ^*\right)\\
& = \|w^*\|\|w\| \phi_0(w, w^*)\left(
\phi_1(w,w)w+\phi_2(w,w)w
\right)\\
&\quad 
-
\|w^*\|\|w^*\| \phi_0(w^*,w^*)\left(
\phi_1(w,w^*)w^* 
+
\phi_2(w,w^*)\frac{\|w^*\|}{\|w\|}w
\right)\\
& = \frac{1}{2} \|w^*\|\|w\| \phi_0(w, w^*) w - \frac{1}{2}\|w^*\|^2  \phi_1(w,w^*)w^*  \\
&\quad -\frac{1}{2} \frac{\|w^*\|^3}{\|w\|}\phi_2(w,w^*)w\\
&=:J_1+J_2+J_3.
\end{align*}
Note that we have used the fact that $\phi_1(w,w) =\frac{1}{2}$ and $\phi_2(w,w)=0$.
From a direct computation, it follows that
\begin{align*}
    I_1J_1 & = \|w\|^4\|w^*\|^2\phi_0(w,w^*)^2,\\
    I_2J_1 & = -\frac{1}{2}\|w\|^3\|w^*\|^3 
    \phi_0(w, w^*)^2\cos \theta(w,w^*),\\
    I_3J_1 & = 
    -\frac{1}{2}\|w\|^3\|w^*\|^3 \phi_0(w, w^*) \phi_1(w, w^*) ,\\
    I_4J_1 & = -\frac{1}{2}\|w\|^3\|w^*\|^3 \phi_0(w, w^*) \phi_2(w, w^*) \cos \theta(w,w^*),\\
    I_1J_2 & = 
    -\|w\|^2\|w^*\|^4\phi_0(w,w^*)\phi_1(w,w^*)\cos\theta(w,w^*),\\
    I_2J_2 & =  \frac{1}{2}\|w\|\|w^*\|^5\phi_0(w,w^*)\phi_1(w,w^*),\\
    I_3J_2 & = \frac{1}{2}\|w\|\|w^*\|^5 \phi_1(w,w^*)^2\cos\theta(w,w^*),\\
    I_4J_2 & = \frac{1}{2}\|w\|\|w^*\|^5\phi_1(w^*,w^*)\phi_2(w,w^*)\cos^2\theta(w,w^*),
    \\
    I_1J_3 & = -\|w\|^2\|w^*\|^4\phi_0(w,w^*)\phi_2(w^*,w^*),\\
    I_2J_3 & = \frac{1}{2}\|w\|\|w^*\|^5 \phi_0(w^*,w^*)\phi_0(w,w^*)\phi_2(w,w^*)  \cos\theta(w,w^*) , \\
    I_3J_3 & = \frac{1}{2}\|w\|\|w^*\|^5 \phi_1(w^*,w^*)\phi_2(w,w^*), \\
    I_4J_3 & = \frac{1}{2} \|w\|\|w^*\|^5\phi_2(w^*,w^*)^2\cos\theta(w^*,w^*).
\end{align*}
Taking the summation over $I_iJ_j$ for $i \in \{1, 2, 3\}$ and $j \in \{1, 2, 3, 4\}$, we arrive at
\begin{align*}
 \frac{1}{2}\frac{d}{dt}\|w-w^*\|^2
 &= - \frac{1}{N}\sum_{k=1}^{N}| \mu_{k}|^2\|w\|\|w^*\|^5
\left(M_3\left(\frac{\|w\|}{\|w^*\|}\right)^3-M_2\left(\frac{\|w\|}{\|w^*\|}\right)^2-M_1\left(\frac{\|w\|}{\|w^*\|}\right)+M_0
\right),
\end{align*}
where 
\begin{align*}
M_{3} & =  \phi_0(w, w^*)^2, \\
M_{2} & =\frac{1}{2} \phi_0(w,w^*)
\Big[ \phi_0(w,w^*)\cos \theta(w,w^*) +\phi_1(w,w^*)+\phi_2(w,w^*)\cos\theta(w,w^*)
\Big], \\
M_{1} & =\phi_0(w,w^*)^2,\\
M_{0} & = \phi_0(w,w^*)^2.
\end{align*}
Then we arrive
\begin{equation}\label{eq:d/dt_1}
\begin{aligned}
    &\frac{1}{2} \frac{d}{dt}\|w-w^*\|^2\\
     &=  - \frac{1}{N}\sum_{k=1}^{N}| \mu_{k}|^2 \phi_0(w,w^*)^2 \|w\|\|w^*\|^5
     \left(\left(\frac{\|w\|}{\|w^*\|}\right)^3-\left(\frac{\|w\|}{\|w^*\|}\right)^2-\left(\frac{\|w\|}{\|w^*\|}\right)+1.
\right)\\
 & \quad  + \frac{1}{N}\sum_{k=1}^{N} \frac{| \mu_{k}|^2}{2}\|w\|^3\|w^*\|^3 \phi_0(w,w^*)
 \Big[(\cos \theta(w,w^*) - 2)\phi_0(w,w^*)+\phi_1(w,w^*)+\phi_2(w,w^*)\cos\theta(w,w^*)\Big] \\
 &=: K_1+K_2.
\end{aligned}
\end{equation}
It is directly checked that  $t^3-t^2-t+1\geq 0\quad \text{for all }t\geq 0$
which implies that  $K_1\leq 0$.
By denoting $\theta = \theta(w,w^*)$ , we compute 
\begin{align*}
&(\cos  \theta- 2)\phi_0(w,w^*)+\phi_1(w,w^*)+\phi_2(w,w^*)\cos\theta(w,w^*) \\
&=((\pi - \theta)\cos\theta+\sin \theta)(\cos \theta -2)+
(\pi-\theta) 
+
\sin \theta\cos \theta\\
&=(\cos \theta -1)((\pi-\theta)(\cos\theta -1)+2\sin \theta)=:g(\theta).
\end{align*}
By Lemma \ref{lem:g_less_0}, we have $g(\theta)\leq 0$ for all $\theta \in [0, \pi]$ implying that $K_2\leq 0$. Note that
\[
\frac{1}{2} \frac{d}{dt}\|w-w^*\|^2 = 0 \iff K_1 = K_2 = 0.
\]
This equality holds only when $\|w\| = \|w^*\|$ and $\theta(w, w^*) = 0$ or $\pi$. If $\theta(w, w^*) = 0$, then $w = w^*$. On the other hand, if $\theta(w, w^*) = \pi$ and $\|w\| = \|w^*\|$, then $w = -w^*$ and $w \not\in \{\|w-w^*\| < \|w^*\|\}$. Therefore, we have
\[
\frac{d}{dt}\|w-w^*\|^2 < 0 \quad \text{under the assumption } w_0 \in \{\|w-w^*\| < \|w^*\|\}. 
\]
Moreover, from the fact that $\frac{1}{2}\|w-w^*\|^2 \leq \frac{1}{2}\|w^*\|^2$, once $w_0 \in \{\|w-w^*\| < \|w^*\|\}$, the parameter vector $w$ remains in $\{\|w-w^*\| < \|w^*\|\}$, thus $w \to w^*$ as $t \to \infty$.

Let us show that adding derivative information to the loss function enhances convergence speed. Representation \cref{eq:u_kXW}  using a vector $x \in \mathbb{R}^n$ instead of the matrix $X \in \mathbb{R}^{J \times n}$ then taking derivative in $x$ variable, we have
\begin{align*}
    D_x u_{k}(x; w) &= D_x\left[\mu_k\alpha(w, w^*) \mathbbm{1}_{\{x^T w > 0\}}(x^Tw)\right]\\
    &= \mu_k\alpha(w, w^*) \mathbbm{1}_{\{x^T w > 0\}}w.
\end{align*}
Similarly, when using the vector representation of 
   \cref{eq:Du_kXW}, we find 
   \begin{align*}
       D_xD_w  u_k(x;w) 
       & =  D_x\left[\mu_k\alpha(w, w^*) \mathbbm{1}_{x^Tw} x +  \frac{\mu_k }{\tilde{J}} \mathbbm{1}_
        {\{x^T w>0 \}} x^Tw \mathbb{E}_{Y}\left[   Y^T D(Y, w)D(Y,w^*)Yw^*\right] \right]\\
       & =  \mu_k\alpha(w, w^*) 
       \mathbbm{1}_
        {\{x^T w>0 \}}  I_d+  \frac{\mu_k }{\tilde{J}} \mathbbm{1}_
        {\{x^T w>0 \}}   \mathbb{E}_{Y}\left[   Y^T D(Y, w)D(Y,w^*)Yw^*\right] w^T\\
        & =  \mu_k\alpha(w, w^*) 
       \mathbbm{1}_
        {\{x^T w>0 \}}  I_d+  \mu_k  \mathbbm{1}_
        {\{x^T w>0 \}}   C\left(\frac{ w}{\|w\|}, w^*\right)w^T
   \end{align*}
Let $\mathcal{L}_{Der}$ be the loss function defined in equation \cref{L2_der_loss}. We then compute 
\begin{equation}\label{DwL_Der}
\begin{aligned}
&D_w \mathcal{L}_{Der} \\
&=
\frac{1}{JN} 
\sum_{k=1}^{N}\sum_{j = 1}^{J}
D_w D_xu_k(x_j, w)^T ( 
D_xu_k(x_j, w) - D_xu_k(x_j, w^*) )\\
& = \frac{1}{JN} 
\sum_{k=1}^{N}\sum_{j = 1}^{J}  |\mu_k|^2 \left(\alpha(w, w^*) 
       \mathbbm{1}_
        {\{x_j^T w>0 \}}  I_d+    \mathbbm{1}_
        {\{x_j^T w>0 \}}   w^TC\left(\frac{w}{\|w\|}, w^*\right)^T\right)\\
        &\quad \times\left(\alpha(w, w^*) \mathbbm{1}_
        {\{x_j^T w>0 \}}w - \alpha(w^* ,w^*) \mathbbm{1}_
        {\{x_j^T w^*>0 \}}w^*\right)\\
        &=\frac{1}{JN} 
\sum_{k=1}^{N}\sum_{j = 1}^{J}  |\mu_k|^2 
\left(
\alpha(w, w^*)^{2} \mathbbm{1}_{\{x_j^T w >0 \}} w +\alpha(w, w^*) \mathbbm{1}_{\{x_j^T w >0 \} }wC\left(\frac{w}{\|w\|},w^*\right)^Tw \right.\\
&\quad  \left. - \alpha(w, w^*) \alpha(w^*,w^*)\mathbbm{1}_{\{x_j^Tw>0\}}
\mathbbm{1}_{\{x_j^Tw^*>0\}}w^* - 
\alpha(w, w^*) \mathbbm{1}_{\{x_j^Tw  >0\}} \mathbbm{1}_{\{x_j^Tw^*>0\}}wC\left(\frac{w}{\|w\|}, w^* \right)^Tw^*\right).
\end{aligned}
\end{equation}
From the fact that  $x_j \sim  \mathcal{N}(0,I)$, we observe that $x_j^T w \sim  \mathcal{N}(0,\|w\|^2)$,  $x_j^T w^* \in \mathcal{N}(0,\|w^*\|^2)$, 
and the correlation between $x_j^T w$ and $x_j^T w^*$ is 
$$
\text{corr}(x_j^T w,x_j^T w^*) =\frac{\mathbb{E}_{x_j}\left[
(x_j^T w)  (x_j^T w)\right]} {\mathbb{E}\left[
(x_j^T w)^2 \right]\mathbb{E}\left[
(x_j^T w^*)^2 \right]} = \frac{w^Tw^*}{\|w\|\|w^*\|}.
$$
From these observations and Lemma \ref{lem:bivariate_normal}, it is directly checked that 
\begin{equation}\label{eq:probability}
\begin{aligned}
\mathbb{E}_{x_j} \left[\mathbbm{1}_{\{x_j^Tw >0 \}}\right] &= 
\mathbb{P}(x_j^Tw >0)=\frac{1}{2},\\
\mathbb{E}_{x_j} \left[\mathbbm{1}_{\{x_j^Tw >0 \}}\mathbbm{1}_{\{x_j^Tw^* >0 \}}\right] 
&= 
\mathbb{P}(x_j^Tw >0,x_j^Tw^* >0)=\frac{1}{4} +\frac{1}{2\pi} \arcsin
\left({\frac{w^T w^*}{\|w\|\|w^*\|}}\right)\\ 
&=\frac{1}{4} +\frac{1}{2\pi} \left(\frac{\pi}{2}  -\theta(w,w^*) \right)    
= \phi_1(w,w^*).
\end{aligned}
\end{equation}
Let  $w$ be the  parameter vector  following $\cref{eq:gradient_flow}_2$. Then it follows that
\begin{align*}
     &\frac{1}{2}\frac{d}{dt} \|w-w^*\|^2 \\& =  - (w - w^*)^ T  \mathbb{E}_{X} \left[\frac{\partial \mathcal{L}_{Sob} }{\partial w} \right]\\
     &=  - (w - w^*)^ T  \mathbb{E}_{X} \left[\frac{\partial \mathcal{L}_{L^2} }{\partial w} \right] - (w - w^*)^ T  \mathbb{E}_{X} \left[\frac{\partial \mathcal{L}_{Der} }{\partial w} \right]=: I_1+I_2.
\end{align*}
To prove the effectiveness of Sobolev Training, we are left to show that $I_2\leq 0$.
By \cref{DwL_Der,eq:probability}, we find
\begin{align*}
  I_2   &= -\frac{1}{JN} 
\sum_{k=1}^{N}\sum_{j = 1}^{J}  |\mu_k|^2  
     (w - w^*)^ T  \Bigg(
\alpha(w, w^*)^{2}\mathbb{E}_{x_j} \left[
\mathbbm{1}_{\{x_j^T w >0 \}} \right] w \\
&\quad +\alpha(w, w^*) \mathbb{E}_{x_j} \left[
\mathbbm{1}_{\{x_j^T w >0 \}} \right] wC\left(\frac{w}{\|w\|},w^*\right)^Tw \\
&\quad   - \alpha(w, w^*) \alpha(w^*,w^*)\mathbb{E}_{x_j} \left[
\mathbbm{1}_{\{x_j^Tw>0\}}
\mathbbm{1}_{\{x_j^Tw^*>0\}}\right]
w^* \\
&\quad - 
\alpha(w, w^*) 
\mathbb{E}_{x_j} \left[
\mathbbm{1}_{\{x_j^Tw>0\}}
\mathbbm{1}_{\{x_j^Tw^*>0\}}\right]wC\left(\frac{w}{\|w\|}, w^* \right)^Tw^*\Bigg)\\
&=:-\frac{1}{JN} 
\sum_{k=1}^{N}\sum_{j = 1}^{J}  |\mu_k|^2  (w-w^*)^TI_{2k}
\end{align*}
With help of \cref{eq:probability}, we have 
\begin{align*}
    I_{2,k} &
    = \frac{\alpha(w, w^*)^2}{2}w + \frac{\alpha(w, w^*)}{2} \left(\phi_1(w,w^*)w^*+\phi_2(w,w^*)\frac{\|w^*\|}{\|w\|}w \right)^T w  w\\
    &\quad -\alpha(w, w^*)\alpha(w^*,w^*)\phi_1(w,w^*)w^* \\
    &\quad - \alpha(w, w^*)\phi_1(w,w^*)\left(
    \phi_1(w,w^*)w^*+\phi_2(w,w^*)\frac{\|w^*\|}{\|w\|}w
    \right)^Tw^* w\\
    & = \frac{1}{2}\|w\|^2 \|w^*\|^2 \phi_0(w,w^*)^2 w\\
    &\quad + \frac{1}{2} \|w\|^2\|w^*\|^2\phi_0(w,w^*)\left(\phi_1(w,w^*)\cos\theta(w,w^*) + \phi_2(w,w^*) \right) w  \\
    &\quad -\frac{1}{2}\|w^*\|^3\|w\|\phi_0(w,w^*)\phi_1(w,w^*)w^*\\
    &\quad -\|w^*\|^3\|w\|\phi_0(w,w^*)\phi_1(w,w^*)\left(
    \phi_1(w,w^*)
    +\phi_2(w,w^*)\cos\theta(w,w^*)
    \right)w\\
    &=
    \|w\|^2\|w^*\|^2\phi_0(w,w^*)^2 w -\frac{1}{2}\|w^*\|^3\|w\|\phi_0(w,w^*)\phi_1(w,w^*)w^*\\
    &\quad -\|w^*\|^3\|w\|\phi_0(w,w^*)\phi_1(w,w^*)\left(
    \phi_1(w,w^*)
    +\phi_2(w,w^*)\cos\theta(w,w^*)
    \right)w.
\end{align*}
Then we calculate to have
\begin{align*}
    &(w-w^*)I_{2,k} \\
    &=
    \|w\|^4\|w^*\|^2\phi_0(w,w^*)^2\\
    &\quad -\|w\|^3\|w^*\|^3\phi_0(w,w^*)^2\cos\theta(w,w^*)\\
    &\quad - \frac{1}{2}\|w^*\|^4\|w\|^2\phi_0(w,w^*)\phi_1(w,w^*)\cos \theta(w,w^*)\\
    &\quad +\frac{1}{2}\|w^*\|^5\|w\|\phi_0(w,w^*)\phi_1(w,w^*)\\
    &\quad -\|w^*\|^3\|w\|^3\phi_0(w,w^*)\phi_1(w,w^*)\left(
    \phi_1(w,w^*)+\phi_2(w,w^*)\cos\theta(w,w^*)
    \right)\\
    &\quad +\|w^*\|^4\|w\|^2\phi_0(w,w^*)\phi_ 1(w,w^*)\cos\theta(w,w^*)
    \left(
    \phi_1(w,w^*)+\phi_2(w,w^*)\cos\theta(w,w^*)
    \right).
    \end{align*}
If  $\phi_0(w,w^*)\not  = 0$, we rearrange above equality as 
\begin{equation}\label{eq:d/dt_2}
\begin{aligned}
        \frac{(w-w^*)^T  
        I_{2,k}}{\phi_0(w,w^*)\|w\|\|w^*\|^{5}}
    & =   M_3 
    \left( \frac{\|w\|}{\|w^*\|}\right)^3 -M_2  \left( \frac{\|w\|}{\|w^*\|}\right)^2 -  M_1\left( \frac{\|w\|}{\|w^*\|}\right)+M_0,
\end{aligned}    
\end{equation}
where 
\begin{align*}
M_{3} & =  \phi_0(w, w^*), \\
M_{2} & =\phi_0(w,w^*)\cos\theta(w,w^*)+\phi_1(w,w^*)(\phi_1(w,w^*)+\phi_2(w,w^*)\cos\theta(w,w^*)), \\
M_{1} & =\phi_1(w,w^*)\cos\theta(w,w^*) \left(\frac{1}{2}   -
\phi_1(w,w^*)- \phi_2(w,w^*)\cos\theta (w,w^*)\right),\\
M_{0} & = \frac{1}{2}\phi_1(w,w^*).
\end{align*}
For notational convenience, we denote
$$
f(t) = M_3t^2-M_2t^2-M_1t +M_0.
$$
Since $\phi_0(w,w^*) \geq 0$, the signs of both sides of \cref{eq:d/dt_2}  are the same. By Lemma \ref{lem:sign_of_3_oder_poly}, $f(t)$ has a local minimum value at
\begin{equation*}
t_0 = \frac{M_2 +\sqrt{M_2^2+3M_1M_3}}{3M_3},
\end{equation*}
with the value of 
\begin{equation*}
f(t_0) = \frac{1}{27M_3^2}\left( 27M_3^2M_0 -2M_2^3 - 9M_1M_2M_3 - 2(M_2^2+3M_1M_3)\sqrt{M_2^2+3M_1M_3} \right).
\end{equation*}
We may assume that $M_3^2+3M_1M_3\geq 0$. Otherwise, $f(t)$ is an increasing function, and since $f(0)= M_0\geq 0$, there is nothing to show. We are left to show that $f(t_0)\geq 0$ if $M_2^2+3M_1M_3 \geq 0$. By denoting
\begin{equation}\label{eq:inequality_to_show}
   h(\theta(w,w^*)) := \frac{27}{2} \phi_0(w,w^*)^2\phi_1(w,w^*) - \left(2M_2^3 -9M_1M_2 M_3 -2(M_2^2+3M_1M_3)\sqrt{M_2^2+3M_1M_3} \right),
\end{equation}
we need to show that $h(\theta)\geq 0$ for all $\theta\in [0,\pi]$ provided that $M_2^2 + 3 M_1M_3\geq 0$. Since showing that $h(\theta)\geq 0$ analytically is excessively technical and falls outside the scope of this work, we shall assume that $h(\theta)\geq 0$ holds for all $0\leq \theta \leq \pi$ to complete the proof.
We refer to Figure \ref{fig:h_bound} in Appendix \ref{appendix:visual_loss} for the numerical evidence.

Let us prove the strict inequality \cref{eq:strict_inequality}. 
Note that $h(\theta)$ has property that
$$
h(\theta)= 0 \iff \theta=0.
$$
Therefore if $\theta >0$ and  
$(w-w^*)^TI_{2,k}=0$ then $\phi_0(w,w^*)=0$ implying that $\theta = \pi$.
Therefore, we conclude that 
$$
\frac{d}{dt}\|w_{Sob}-w^*\|^2 < \frac{d}{dt}\|w_{L^2}-w^*\|^2 <0
\quad
\text{for all $0<\theta<\pi$.}
$$
\end{proof}

\section{Visualization of the loss functions}\label{appendix:visual_loss}
This section provides a visual guide of a 3D visual representation of a loss function landscape and shows that Sobolev Training is effective. For simplicity, we assume $N=1$ and $\mu_1=1$ in 
 \cref{eq:d/dt_1,eq:d/dt_2}. 
Then we denote 
$$V_{L^2}  :=  \frac{\frac{d}{dt}\|w_{L^2}-w^*\|^2}{2 \phi_0(w,w^*) \|w\|\|w^*\|^5} 
\quad \text{and}\quad
V_{Sob}  :=  \frac{\frac{d}{dt}\|w_{Sob}-w^*\|^2}{2 \phi_0(w,w^*) \|w\|\|w^*\|^5} .
$$
Then we could obtain the following loss function landscape for $0\leq \theta\leq \pi$ and $x= \frac{\|w\|}{\|w^*\|}$ in Figure \ref{fig:loss_landscape_3d}. 
 We also provide a comparison for $V_{L^2}$  and 
$V_{Sob}$ for fixing one variable  in Figure \ref{fig:loss_landscape_2d_x_fix} and Figure \ref{fig:loss_landscape_2d_theta_fix}.
Moreover, since proving the bound \cref{eq:inequality_to_show} is technically challenging, we provide a visual graph of the function $h(x)$ in Figure \ref{fig:h_bound}.

\begin{figure}[t]
\vskip 0.2in
\begin{center}
\centerline{\includegraphics[width=1.0\columnwidth]{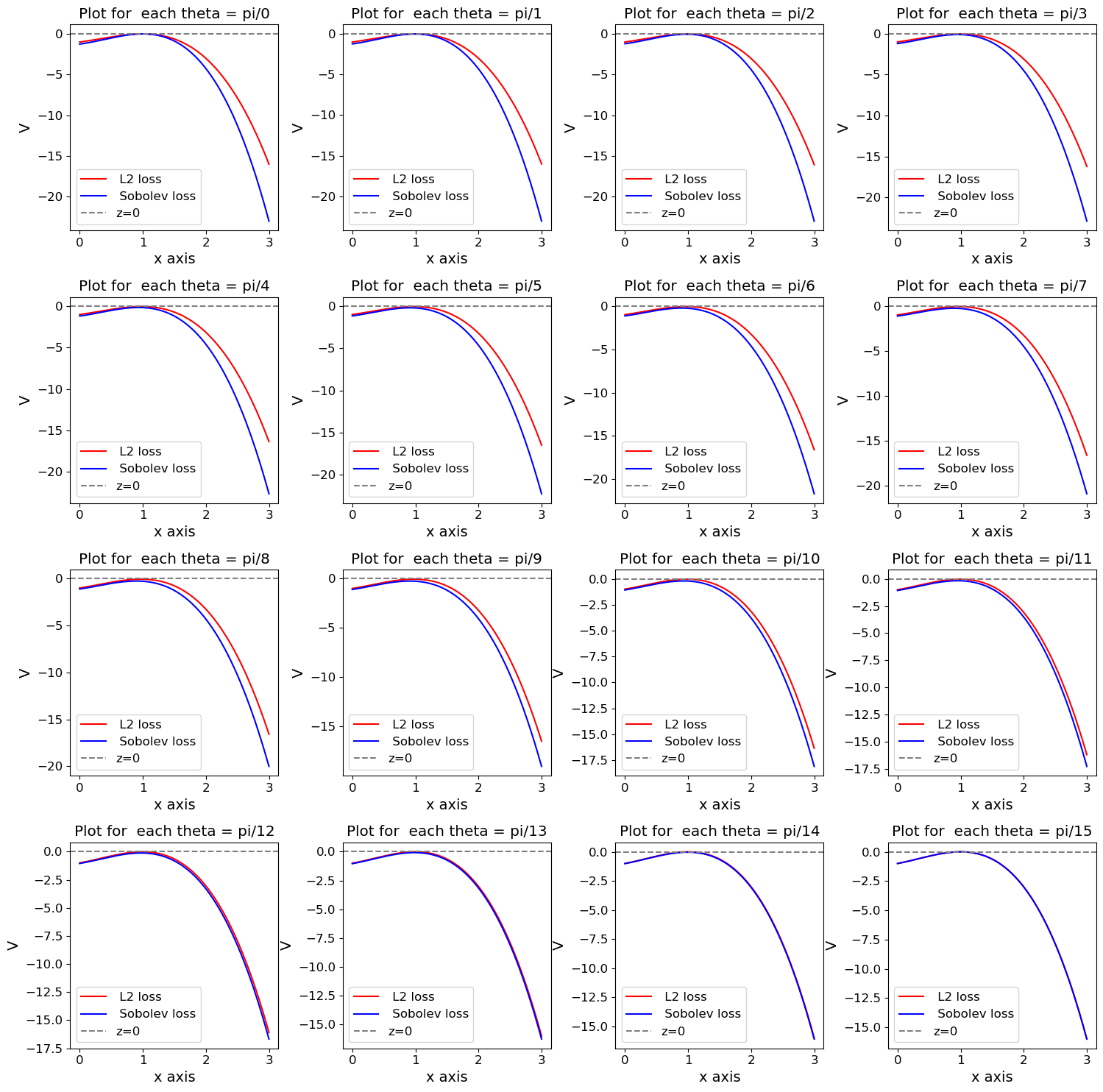}}
\caption{Visualization of comparison between loss function landscape for fixed $\theta\in [0, \pi]$.
}
\label{fig:loss_landscape_2d_x_fix}
\end{center}
\vskip -0.2in
\end{figure}
\begin{figure}[t]
\vskip 0.2in
\begin{center}
\centerline{\includegraphics[width=1.0\columnwidth]{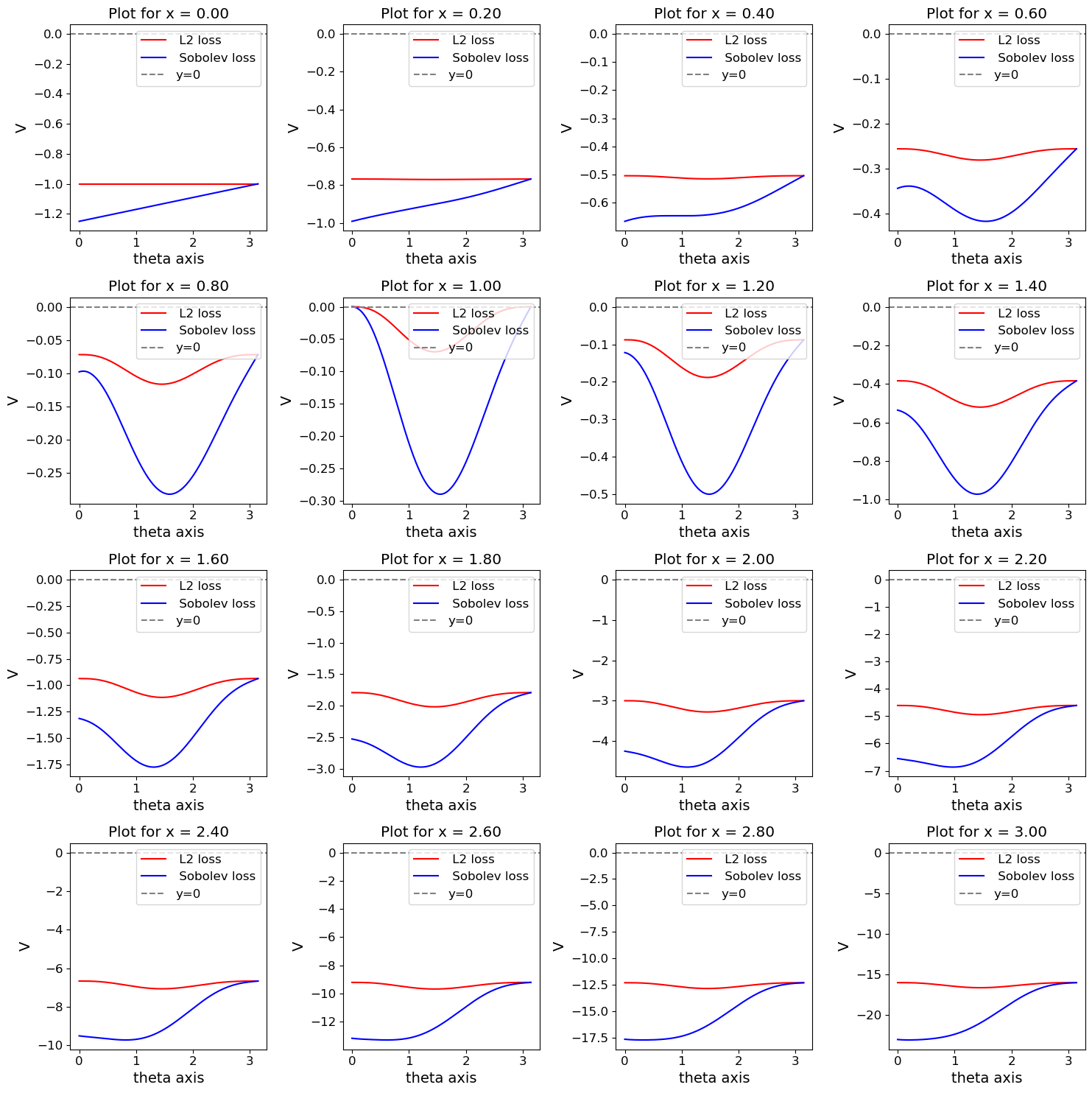}}
\caption{Visualization of comparison between loss function landscape for fixed  $x\in [0, 3]$.
}
\label{fig:loss_landscape_2d_theta_fix}
\end{center}
\vskip -0.2in
\end{figure}
\begin{figure}[t]
\vskip 0.2in
\begin{center}
\centerline{\includegraphics[width=1.0\columnwidth]{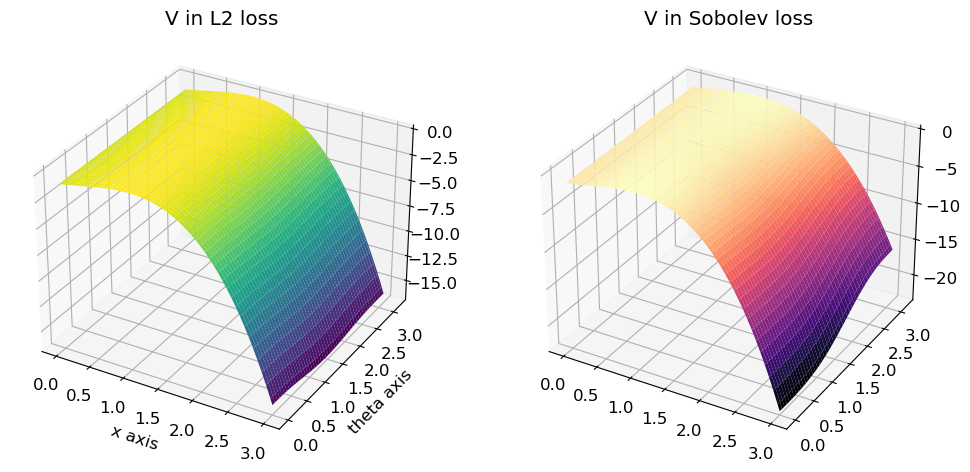}}
\caption{Visualization of loss function landscape for $L^2$ loss and Sobolev loss in 3D.
}
\label{fig:loss_landscape_3d}
\end{center}
\vskip -0.2in
\end{figure}

\begin{figure}
\begin{center}
\centerline{\includegraphics[width=1.0\columnwidth]{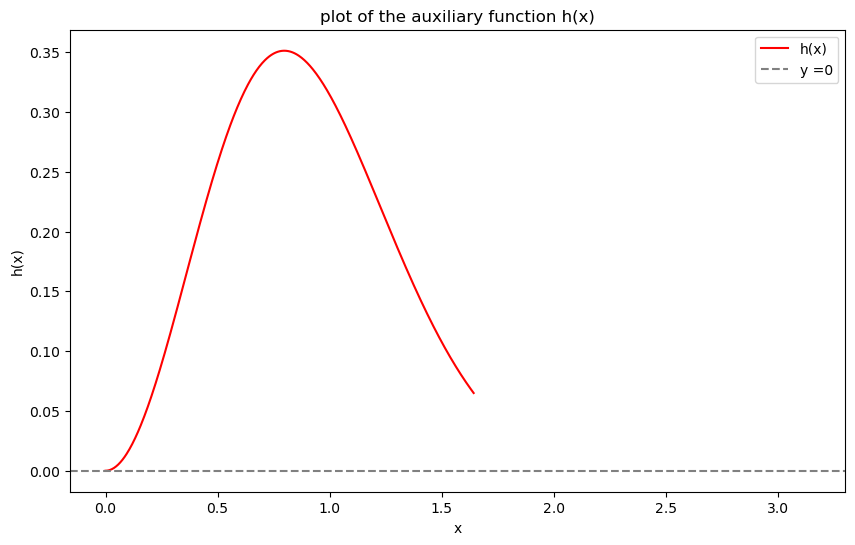}}
\caption{Visualization of $h(\theta)$ define in \cref{eq:inequality_to_show}. Note the $h(\theta)$ is not defined for all $\theta\in [0,\pi]$ because we need to exclude the case when $M_2^2+3M_1M_3<0$.
}
\label{fig:h_bound}
\end{center}
\vskip -0.2in
\end{figure}

\end{document}